\documentclass{article} %
\usepackage{iclr2026_conference,times}

\usepackage{amsmath,amsfonts,bm}
\usepackage{amsthm}
\newtheorem{theorem}{Theorem}

\newcommand{\tr}{{{\mathsf T}}}

\def\eqref#1{equation~\ref{#1}}

\def\1{\bm{1}}

\def\vz{{\bm{z}}}

\def\mA{{\bm{A}}}

\def\mM{{\bm{M}}}

\def\mO{{\bm{O}}}

\def\mW{{\bm{W}}}
\def\mX{{\bm{X}}}

\DeclareMathAlphabet{\mathsfit}{\encodingdefault}{\sfdefault}{m}{sl}
\SetMathAlphabet{\mathsfit}{bold}{\encodingdefault}{\sfdefault}{bx}{n}

\def\gL{{\mathcal{L}}}

\usepackage{xcolor}
\definecolor{citeblue}{RGB}{0,102,204}
\definecolor{refred}{RGB}{216, 81, 64}
\usepackage[colorlinks=true,
            linkcolor=refred,
            citecolor=citeblue,
            urlcolor=blue]{hyperref}
\usepackage{url}
\usepackage{booktabs}
\usepackage{multirow}
\usepackage{amssymb}
\usepackage{wrapfig} 
\usepackage{caption}
\iclrfinalcopy

\title{DyME: Dynamic Multi-Concept Erasure in Diffusion Models with Bi-level Orthogonal LoRA Adaptation
}
\author{Jiaqi Liu \\
  Clemson University \\
  \texttt{jiaqi3@clemson.edu}
  \And
  Lan Zhang \\
  Clemson University \\
  \texttt{lan7@clemson.edu}
  \And
  Xiaoyong Yuan \\
  Clemson University \\
  \texttt{xiaoyon@clemson.edu}
}

\usepackage{enumitem}
\usepackage{graphicx} 
\usepackage{xspace} 
\makeatother

\newcommand{\proposed}{\textsc{DyME}\xspace}
\def\eg{\emph{e.g}\onedot} 
\def\ie{\emph{i.e}\onedot}

\PassOptionsToPackage{hyphens}{url}
\usepackage{hyperref}   %
\usepackage{xurl}       %
\Urlmuskip=0mu plus 1mu %
\hypersetup{breaklinks=true}

\makeatletter
\DeclareRobustCommand\onedot{\futurelet\@let@token\@onedot}
\def\@onedot{\ifx\@let@token.\else.\null\fi\xspace}

\def\eg{\emph{e.g}\onedot} 
\def\ie{\emph{i.e}\onedot}

\makeatother

\newcommand{\ignore}[1]{}   %

\begin{document}

\maketitle

\begin{abstract}
Text-to-image diffusion models (DMs) inadvertently reproduce copyrighted styles and protected visual concepts, raising legal and ethical concerns. Concept erasure has emerged as a safeguard, aiming to selectively suppress such concepts through fine-tuning. However, existing methods do not scale to practical settings where providers must erase multiple and possibly conflicting concepts. The core bottleneck is their reliance on static erasure: a single checkpoint is fine-tuned to remove all target concepts, regardless of the actual erasure needs at inference. This rigid design mismatches real-world usage, where requests vary per generation, leading to degraded erasure success and reduced fidelity for non-target content.

We propose \proposed, an on-demand erasure framework that trains lightweight, concept-specific LoRA adapters and dynamically composes only those needed at inference. This modular design enables flexible multi-concept erasure, but naive composition causes interference among adapters, especially when many or semantically related concepts are suppressed. To overcome this, we introduce bi-level orthogonality constraints at both the feature and parameter levels, disentangling representation shifts and enforcing orthogonal adapter subspaces. We further develop \textsc{ErasureBench-H}, a new hierarchical benchmark with brand–series–character structure, enabling principled evaluation across semantic granularities and erasure set sizes. Experiments on \textsc{ErasureBench-H} and standard datasets (\eg, CIFAR-100, Imagenette) demonstrate that \proposed consistently outperforms state-of-the-art baselines, achieving higher multi-concept erasure fidelity with minimal collateral degradation.

\end{abstract}

\section{Introduction}

Recent advances in text-to-image diffusion models (DMs)~\citep{rombach2022high}, have enabled remarkable generation capabilities across a vast range of visual concepts. This expressiveness, however, has raised pressing legal and ethical issues: DMs can easily reproduce copyrighted content such as trademarked characters, corporate logos, and proprietary designs~\citep{c:01,c:03,c:04}, exposing providers to increasing legal scrutiny and lawsuits~\citep{c:07,c:08}. To mitigate these risks, \textit{concept erasure} has emerged as a practical safeguard that disables a model’s ability to generate protected or unwanted content while preserving quality for unrelated concepts. Typical methods fine-tune the DM so that prompts invoking a target concept (e.g., “a picture of Mickey Mouse”) are redirected to neutral substitutes (e.g., “a generic cartoon character”), enabling targeted removal without retraining from scratch~\citep{zhang2024forget,lyu2024one,orgad2023editing,gandikota2024unified,gong2024reliable,fan2024salun}. 

While effective in the single-concept case, existing methods struggle with the multi-concept erasure required in practice, such as takedown requests covering all copyrighted characters in a specific series, brand or arbitrary combinations thereof. As the \textit{erasure scope} (the full set of concepts prepared for removal) expands, their performance deteriorates due to two core limitations. First, parameter-level conflicts arise when model updates for multiple concepts, causing gradients to clash and weakening the removal of a large number of concepts. Second, at the semantic level, related concepts often share latent attributes or representation directions, making selective suppression difficult. This leads to both leakage of target concepts and degradation of generating benign content~\citep{nie2025erasing,kumari2023ablating}.

\begin{wrapfigure}[23]{r}{0.5\linewidth} %
\centering
\captionsetup{font=footnotesize}
\includegraphics[width=\linewidth]{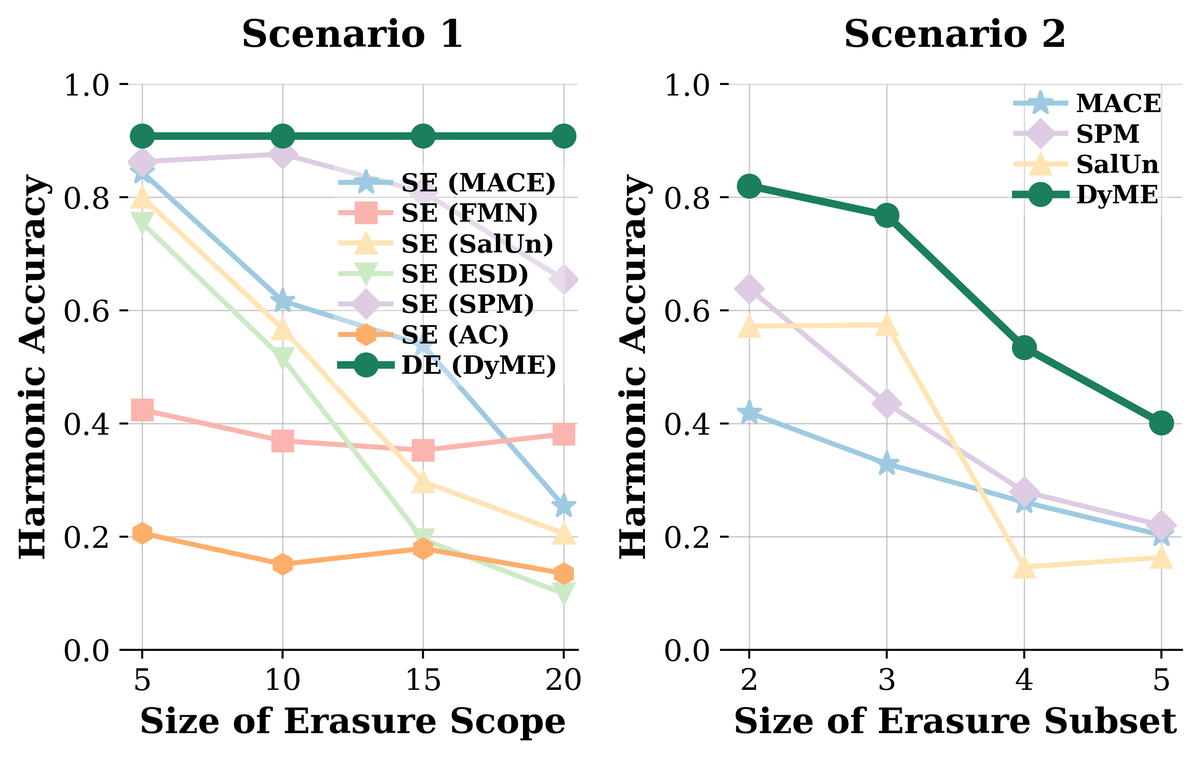}
\vspace{-0.5em}
\caption{%
\textbf{Multi-Concept Erasure Scalability.}
Scenario~1 (left): We increase erasure scope but in each generation only one concept is erased. Static erasure methods (\textbf{SE}: MACE, FMN, SalUn, ESD, SPM, AC) significantly degrade, whereas our dynamic erasure (\textbf{DE}: \proposed) remains effective.
Scenario~2 (right): We increase the number of concepts per generation (\textit{erasure subset}), with fixed erasure scope, \proposed significantly outperforms existing methods (MACE, SPM, SalUn). Harmonic accuracy is defined in Sec.~\ref{exp:Evaluation Metrics}.
}
\label{Fig:dynamic erasure advantages}
\vspace{-0.5em}     
\end{wrapfigure} 
A closer look reveals that these failures stem from the \textit{static erasure paradigm} adopted by prior methods. Each fine-tuning run targets a fixed set of concepts, producing a checkpoint that can only suppress this set as a whole, regardless of the per-generation erasure subset (the specific set requested at inference). This rigid design mismatches real-world demands, where erasure requests vary per generation and typically involve only the concepts explicitly invoked in a user’s prompt. For example, if the erasure scope covers all Disney characters, static methods suppress every character in that set for every prompt. Yet in practice, if a user enters the prompt “a photo of Mickey Mouse” to generate an image, the erasure subset contains only Mickey Mouse, since no other Disney characters are involved. Static approaches cannot make this distinction: once trained on a broad erasure scope, they erase all included concepts indiscriminately, reducing diversity and degrading fidelity. As shown in Fig.~\ref{Fig:dynamic erasure advantages}~(Scenario~1), this static design scales poorly as the erasure scope widens.

To overcome these limitations, we shift to an \textit{on-demand erasure framework}. The key idea is to decouple training from inference by distinguishing between the erasure scope  and the erasure subset. This separation enables a modular design in which erasure components are trained collectively for coverage but activated selectively per generation. As a result, each request suppresses the necessary concepts, minimizing collateral damage and preserving generation quality for non-target content.

Building on this principle, we introduce \proposed, a \textbf{dy}namic \textbf{m}ulti-concept \textbf{e}rasure framework that treats concept removal as an on-demand capability. \proposed trains lightweight, concept-specific LoRA~\citep{hu2022lora} adapters, and at inference, dynamically composes only those adapters corresponding to the erasure subset. This design provides per-generation control: when the erasure subset is fixed, performance remains stable as the erasure scope grows, as shown by the green curve in Scenario~1 of Fig.~\ref{Fig:dynamic erasure advantages}, yielding an unchanged and substantial advantage of \proposed over static erasure methods. A key challenge, however, is LoRA crosstalk~\citep{dalva2025lorashop,gu2023mix,po2024orthogonal,simsar2025loraclr}: non-orthogonal updates from multiple adapters can interfere in shared layers especially cross-attention, degrading both erasure reliability and generation fidelity. To overcome this, we develop bi-level orthogonality constraints: an input-aware constraint that disentangles LoRA-induced representation shifts for specific prompts, and a parameter-level constraint that enforces independence across adapter weights globally. Together, these constraints ensure that adapters operate in complementary subspaces, enabling robust multi-concept erasure.

Finally, to enable rigorous evaluation of multi-concept erasure scalability, we extend prior evaluation that vary only the erasure scope by additionally scaling with the per-generation erasure subset. Concretely, we instantiate scaling erasure subset requests in two ways: (i) simply using conjunctions explicitly invoke multiple concepts per generation; and (ii) by enlarging the \textit{concept scope} of a named concept, where concept scope is defined as the number of defined unit concepts it subsumes. However, on standard flat-category benchmarks such as CIFAR-100~\citep{krizhevsky2009learning} and Imagenette~\citep{howard2020fastai}, concepts are not hierarchically nested, making unit concepts and thus concept scope ill-defined. So we introduce \textsc{ErasureBench-H}, a benchmark with a hierarchical \emph{brand–series–character} structure that mirrors real-world takedown requests targeting groups of related concepts. This hierarchy makes concept scope explicit (e.g., a brand covers multiple series, which in turn cover multiple characters), thereby supporting controlled analyses across per-generation erasure subset sizes by varying concept scope. \textsc{ErasureBench-H} thus provides a principled testbed for evaluating scalable, dynamic erasure methods beyond what flat-category datasets allow. To the best of our knowledge, this is the first work to systematically investigate multi-concept erasure scalability in diffusion models.

\vspace{+.2em}
Our main contributions are threefold:
\vspace{-.6em}
\begin{itemize}[leftmargin=2em]
    \item We formalize multi-concept erasure in diffusion models, identify parameter- and semantic-level coupling as key barriers, and introduce the {scope–subset distinction} to enable scalable erasure.%
    \vspace{-.5em}
    \item We propose \proposed, a dynamic erasure framework that trains modular LoRA adapters and introduces bi-level orthogonality constraints to mitigate crosstalk, ensuring reliable multi-concept composition. %
    \vspace{-.5em}
    \item We release \textsc{ErasureBench-H}, a hierarchical benchmark for real-world multi-concept evaluation, and show through extensive experiments on CIFAR-100, Imagenette, and \textsc{ErasureBench-H} that \proposed consistently outperforms static baselines, achieving $>$90\% harmonic accuracy even as the erasure scope grows. Moreover, when the size of erasure subset increases, \proposed maintains a clear lead over all baselines.
    \vspace{-.5em}
\end{itemize}

\section{Related Work}
\textbf{Concept erasure in diffusion models.}
Recent work on concept erasure aims to remove targeted concepts from text-to-image diffusion models (e.g., Stable Diffusion) while preserving non-target fidelity~\citep{fan2024salun,li2024safegen,schramowski2023safe}.
Among fine-tuning approaches, FMN~\citep{zhang2024forget} suppresses targets by re-steering cross-attention (CA) scores of the corresponding tokens; ESD~\citep{gandikota2023erasing} aligns target concepts toward a surrogate distribution via CA-layer fine-tuning; and SPM~\citep{lyu2024one} inserts rank-1 parameter fine-tuning into selected layers, which are trained to map the target concept to a safe surrogate.
 UCE~\citep{gandikota2024unified} provide closed-form updates for the cross-attention projection matrices, and yield fast edits.
Besides, SalUn~\citep{fan2024salun} uses a gradient-based saliency mask to update parameters most salient to the forgetting objective.
MACE~\citep{lu2024mace} couples a closed-form initialization with lightweight LoRA refinement, fusing per-concept modules together.
However, by assuming a static erasure paradigm, prior work risks growing cross-concept interference. We investigate a dynamic framework to improve the scalability and reliability of multi-concept erasure.

\textbf{LoRA composition and interference mitigation.}
LoRA~\citep{hu2022lora} adapts diffusion models by injecting low-rank updates into linear layers while freezing base weights, enabling parameter-efficient personalization~\citep{tewel2024training}.
To preserve plug-and-play control at inference, composition techniques determine how multiple adapters interact.
LoRA-Merge~\citep{zhong2024multi} linearly fuses several low-rank deltas into the base weights to produce a single set of weights.
LoRA-Switch~\citep{zhong2024multi} keeps adapters separate and activates one adapter (or schedules different ones) across denoising steps.
LoRA-Composite~\citep{zhong2024multi} mixes multiple adapters via uniform or weighted averaging to support multi-concept/style control.
However, simple composition can induce concept conflicts and identity loss~\citep{gu2023mix}. To address this, Mix-of-Show~\citep{gu2023mix} formulates a constrained optimization to merge individually trained LoRAs while preserving identity, yet it ultimately consolidates multiple adapters into a single LoRA, reverting to a static erasure paradigm. Orthogonal Adaptation~\citep{po2024orthogonal} encourages zero inner products between per-concept parameter matrices, but abstracts away from analyzing the cross-attention projections where LoRA is actually injected; we address these gaps by proposing bi-level orthogonality constraints directly on these projections to better reduce interference, and adopting the training-free LoRA-Composite that enables dynamic erasure without retraining or per-subset checkpoints.

\section{Problem Statement and Challenges}
\textbf{Problem statement.} Concept erasure aims to disable a model's ability to generate specific visual concepts, such as copyrighted characters. Formally, let $\mathcal{C}$ denote the universe of possible visual concepts. An erasure scope $\mathcal{C}_{\text{scope}} \subseteq \mathcal{C}$ is specified as all concepts the model should be prepared to erase. At inference, a narrower erasure subset $\mathcal{C}_{\text{subset}} \subseteq \mathcal{C}_{\text{scope}}$ is identified, corresponding to the concepts that should be suppressed for a given prompt or generation. The goal of concept erasure is twofold:
(1) For any prompt $\bm{p}$ that invokes a target concept $c \in \mathcal{C}_{\text{subset}}$, the generated image $\bm{x}_0$ should omit that concept;
(2) For prompts containing only non-target concepts, $\bm{x}_0$ should remain consistent with the original model's distribution.

\textbf{Challenges in current concept erasure methods.}
\label{method:challenges}
While existing concept erasure methods primarily address the single-concept case, providers often face requests to suppress multiple related concepts, such as all copyrighted characters from a specific series or brand. This multi-concept erasure setting, where the model must handle arbitrary subsets $\mathcal{C}_{\text{subset}} \subseteq \mathcal{C}_{\text{scope}}$, introduces the risk of interference between concept updates. 

A straightforward strategy is to fine-tune the model to suppress the entire $\mathcal{C}_{\text{scope}}$, a static approach. As the size of $\mathcal{C}_{\text{scope}}$ increases, interference accumulates between erased concepts, as well as between erased and preserved concepts. This results in degraded erasure effectiveness and lower non-target fidelity, making static erasure unsuited to dynamic or large-scale policies.

\begin{wrapfigure}[17]{r}{0.5\linewidth} %
\vspace{-1\baselineskip}
\centering
\captionsetup{font=footnotesize} %
\includegraphics[width=\linewidth]{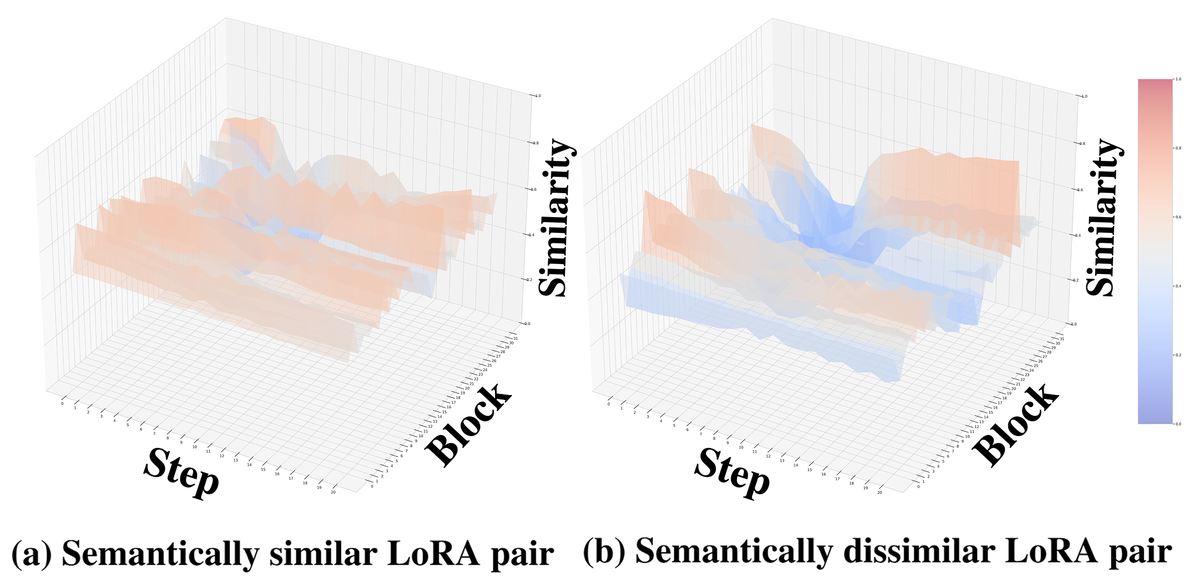}
\caption{\textbf{LoRA crosstalk analysis}. Cosine-similarity heatmaps of LoRA-induced changes in cross-attention outputs across timesteps and U-Net blocks. (a) Semantically similar pairs show high similarity (red), indicating overlapping updates and strong crosstalk. (b) Semantically dissimilar pairs are more orthogonal (blue), showing reduced interference.}
\label{fig:OSevidence}
\vspace{-1\baselineskip}
\end{wrapfigure}

A more flexible strategy is to train concept-specific LoRA adapters and activate only those required for a particular $\mathcal{C}_{\text{subset}}$. This modular approach avoids unnecessary interference from irrelevant concepts and allows for dynamic erasure. However, when multiple adapters are activated together, their parameter updates can overlap in the shared model layers, causing destructive crosstalk. This is especially severe when the erased concepts are semantically similar or share visual features, leading to both erasure leakage and collateral degradation. Figure~\ref{fig:OSevidence} illustrates this crosstalk, with heatmaps showing that LoRA-induced changes for similar concepts tend to align strongly (low orthogonality).
Together, these challenges highlight the need for a principled framework that can support arbitrary erasure subsets while minimizing destructive crosstalk and preserving fidelity.

\section{\proposed: Dynamic Multi-Concept Erasure Framework}
To overcome the limitations of prior methods, we introduce \proposed, a \textit{Dynamic Multi-Concept Erasure} framework that treats concept erasure as an on-demand capability rather than a one-time fine-tune. Instead of producing a single static checkpoint tied to a fixed erasure scope, \proposed equips a pre-trained DM with a set of lightweight, concept-specific LoRA modules. At inference, only the LoRAs corresponding to the requested erasure subset are activated and composed, enabling efficient and flexible erasure across arbitrary combinations without retraining or checkpoint management.

Figure~\ref{fig:pipeline} illustrates the \proposed \textbf{workflow} in four steps.
\textit{Step 1}: Define the erasure scope $\mathcal{C}_{\text{scope}}$, the full set of concepts that may be erased, and specify neutral substitutes that determine how erased concepts should appear (e.g., background, empty, or generic replacements). In our setup, we adopt the absence variant as the reconstruction target.
\textit{Step 2}: Attach a lightweight LoRA module to the backbone for each concept $c_i \in \mathcal{C}_{\text{scope}}$.
\textit{Step 3}: Train all LoRA modules with a joint objective that combines reconstruction fidelity with orthogonality-based disentanglement, ensuring each module suppresses its target concept without interfering with others.
\textit{Step 4}: At inference, given a prompt and user-specified erasure subset $\mathcal{C}_{\text{subset}} \subseteq \mathcal{C}_{\text{scope}}$, \proposed activates only the relevant LoRAs, composes their outputs into a single denoising direction, and generates the final image. 
This modular pipeline decouples training from inference: LoRAs are trained collectively for coverage but designed for composability, enabling scalable, per-demand erasure. We next describe how \proposed enforces this composability through {bi-level orthogonality constraints}.

\begin{figure}[!t]
    \centering
    \includegraphics[width=1\linewidth]{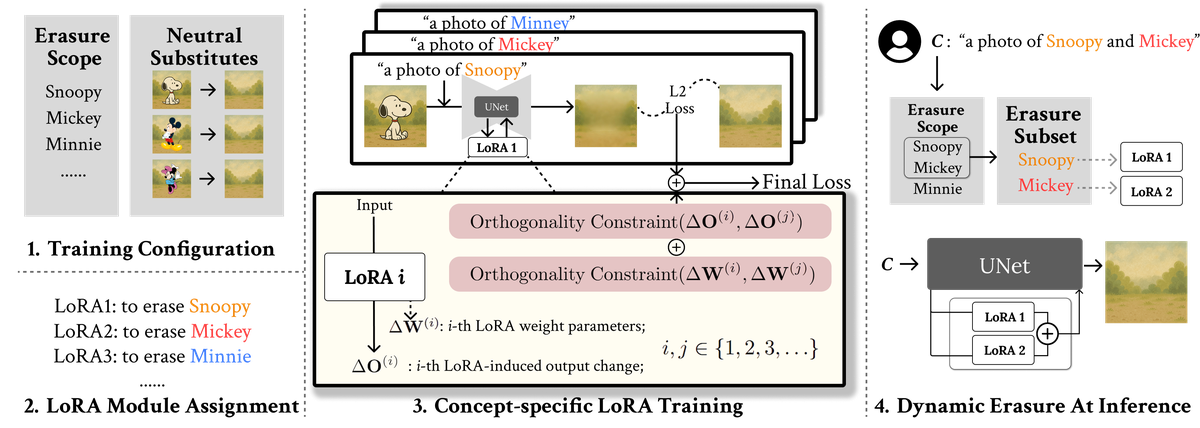}
    \caption{\textbf{\proposed overview.} Workflow from scope definition and LoRA assignment to training with orthogonality constraints and dynamic composition at inference.
    }\label{fig:pipeline}
    \vspace{-1em}
\end{figure}

\subsection{Training with Bi-level Orthogonality Constraints}
Naively composing multiple LoRAs for concepts in $\mathcal{C}_{\text{subset}}$ leads to crosstalk, particularly when concepts are semantically related and their induced updates overlap in shared layers. To enable stable multi-concept composition, we introduce a \textit{bi-level orthogonality strategy} that integrates both input-aware and input-agnostic constraints, mitigating interference both on observed training data and in unseen scenarios.

\label{method:aware}
\textbf{Input-aware orthogonality constraint.}
To address prompt-specific interference during training, we regularize LoRAs to produce disentangled representation shifts in cross-attention activations. For each pair of concepts, we compute the change induced by their respective LoRA modules and penalize alignment between these shifts. This encourages LoRAs to act along orthogonal directions in the representation space, ensuring that composing adapters for the concepts and prompts seen during training does not introduce redundant or conflicting updates.

Suppose for concept $c_i$, the LoRA-induced update to the backbone weight matrices $\mathbf{W}_\star^{(0)}$ is $\Delta \mathbf{W}_\star^{(i)}$ for $\star \in \{q,k,v,o\}$ ($q,k,v,o$ denote, respectively, the query, key, value, and output projections of cross-attention).
 Let the modified weights be $\mW_\star^{(i)} = \mW_\star^{(0)} + \Delta \mW_\star^{(i)}$, and let $\mX \in \mathbb{R}^{d \times d_e}$ be the text embedding. The LoRA-modified output is:
\begin{equation}
\mO^{(i)}
= \mW_o^{(i)} \mW_v^{(i)} \mX \cdot \sigma \!\left(\frac{(\mW_k^{(i)} \mX )^\tr\mW_q^{(i)}\vz^{(i)}}{\sqrt{d_e}}  \right),
\label{def:o}
\end{equation}
where $\vz^{(i)}$ is the visual query token and $\sigma(\cdot)$ is the softmax.
The induced shift is $\Delta \mO^{(i)} = \mO^{(i)} - \mO^{(0)}$.
We define the orthogonality score between two LoRA modules as:
\begin{equation}
\mathrm{OS}(i,j)=1-\frac{\langle \Delta\mO^{(i)},\Delta\mO^{(j)}\rangle_F}{\|\Delta\mO^{(i)}\|_F\,\|\Delta\mO^{(j)}\|_F},
\end{equation}

where $\langle \cdot, \cdot \rangle_F$ denotes the Frobenius inner product.
Then we construct the input-aware orthogonality loss as:
\begin{equation}
\gL_{\text{ortho}}^{\text{aware}}
= - \mathbb{E}_{(c_i,c_j) \sim \mathcal{C}_\text{scope}, i \neq j}
\left[ \mathrm{OS}(i,j) \right],
\end{equation}
which penalizes correlated LoRA-induced shifts across pairs, sampled from the erasure scope.

\label{method:agnostic}
\textbf{Input-agnostic orthogonality constraints.}
While effective on observed data, input-aware constraints alone can be limited: they depend on the coverage of training prompts and may leave residual overlap in unobserved or biased input distributions. In other words, orthogonality enforced on training samples does not guarantee global disentanglement, especially as real-world prompts are diverse and unpredictable.

To address these gaps, we introduce an input-agnostic constraint, which operates directly in the parameter space of the LoRA modules, independent of specific input prompts. By encouraging the parameters associated with each concept's LoRA to be orthogonal, we promote global disentanglement and robustness to prompt distribution shift. To make this precise, we formalize a parameter-space condition that serves as an input-agnostic surrogate for output orthogonality:

\begin{theorem}
\label{prop:input_ag}
Let $c_i$ and $c_j$ be any two distinct concepts $(i \neq j)$. Suppose LoRA adaptation is restricted to the value and output projections $\mW_v$, $\mW_o$, with the query and key projections fixed $(\mW_q^{(i)} = \mW_q^{(0)},\, \mW_k^{(i)} = \mW_k^{(0)})$. Define 
\[
    \mM^{(i)} := \mW_o^{(0)}\Delta \mW_v^{(i)} + \Delta \mW_o^{(i)} \mW_v^{(0)} + \Delta \mW_o^{(i)}\Delta \mW_v^{(i)}.
\]
Then a sufficient condition for the orthogonality of the representation shifts,
\(
    \langle \Delta \mO^{(i)}, \Delta \mO^{(j)} \rangle_F = 0,
\)
for all input text embeddings $\mX$ and queries $\vz^{(i)}, \vz^{(j)}$, is that
\[
    (\mM^{(i)})^\tr \mM^{(j)} + (\mM^{(j)})^\tr \mM^{(i)} = \mathbf{0}.
\]
\end{theorem}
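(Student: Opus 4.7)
The plan is to reduce each representation shift $\Delta\mO^{(i)}$ to the input-linear form $\mM^{(i)}\vy$, where $\vy$ is a vector built from $\mX$ and the LoRA-independent softmax attention weights, and then exploit the fact that the resulting Frobenius inner product collapses to a scalar quadratic form in $\vy$ that can be symmetrized.

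First I would use the hypothesis that $\mW_q$ and $\mW_k$ are frozen at their base values, which makes the softmax term inside the definition of $\mO^{(i)}$ equal to a common vector $\va := \sigma\!\big((\mW_k^{(0)}\mX)^\tr\mW_q^{(0)}\vz/\sqrt{d_e}\big)$, identical for the base model and every LoRA-adapted model. All LoRA dependence is therefore pushed into the composite projection $\mW_o^{(i)}\mW_v^{(i)}$. Expanding,
\[
\mW_o^{(i)}\mW_v^{(i)} = (\mW_o^{(0)}+\Delta\mW_o^{(i)})(\mW_v^{(0)}+\Delta\mW_v^{(i)}) = \mW_o^{(0)}\mW_v^{(0)} + \mM^{(i)},
\]
exactly matching the stated definition of $\mM^{(i)}$ as the sum of cross and second-order terms. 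Subtracting the base output and setting $\vy := \mX\va$ yields the clean form $\Delta\mO^{(i)} = \mM^{(i)}\vy$, with $\vy$ depending only on the common inputs $(\mX,\vz)$ and not on $i$ or $j$.

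Next, the Frobenius inner product collapses to
\[
\langle \Delta\mO^{(i)}, \Delta\mO^{(j)}\rangle_F = \vy^\tr(\mM^{(i)})^\tr\mM^{(j)}\vy,
\]
which is a scalar and hence equal to its own transpose. Applying the standard symmetrization identity $\vy^\tr\mK\vy = \tfrac{1}{2}\vy^\tr(\mK+\mK^\tr)\vy$ with $\mK=(\mM^{(i)})^\tr\mM^{(j)}$ rewrites the above as
\[
\langle \Delta\mO^{(i)}, \Delta\mO^{(j)}\rangle_F = \tfrac{1}{2}\vy^\tr\big[(\mM^{(i)})^\tr\mM^{(j)} + (\mM^{(j)})^\tr\mM^{(i)}\big]\vy,
\]
which vanishes for every $\vy$, and hence for every input, precisely under the stated antisymmetry condition.

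The main obstacle is purely notational: the symmetrization argument works cleanly only when the two shifts $\Delta\mO^{(i)}$ and $\Delta\mO^{(j)}$ are evaluated at a shared visual query $\vz$, which is the natural setting during multi-LoRA composition at a single denoising step. For genuinely different $\vz^{(i)}\ne\vz^{(j)}$, the inner product would reduce to a bilinear form $\vy^{(i)\tr}(\mM^{(i)})^\tr\mM^{(j)}\vy^{(j)}$ and antisymmetry of $(\mM^{(i)})^\tr\mM^{(j)}$ alone would no longer suffice. I would therefore state up front that the shifts are compared at a common $(\mX,\vz)$, which is equivalent to integrating against a symmetric positive measure over query locations; a trace-cyclicity version of the same symmetrization then closes the argument without having to strengthen the hypothesis.
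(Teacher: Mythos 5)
Your proof takes essentially the same route as the paper's: freeze $\mW_q,\mW_k$ so the attention weights are LoRA-independent, observe $\mW_o^{(i)}\mW_v^{(i)}-\mW_o^{(0)}\mW_v^{(0)}=\mM^{(i)}$ so that $\Delta\mO^{(i)}=\mM^{(i)}\mX\mA^{(i)}$, and then kill the Frobenius inner product by symmetrizing $\mK:=(\mM^{(i)})^\tr\mM^{(j)}$. The one point where you depart from the paper is the caveat you raise at the end, and it is a genuine mathematical issue, not a notational one: skew-symmetry of $\mK$ annihilates only the quadratic form $\vy^\tr\mK\vy$, i.e.\ the case of a shared query $\vz^{(i)}=\vz^{(j)}$ giving a common $\vy=\mX\mA$. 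The theorem as stated quantifies over arbitrary $\vz^{(i)},\vz^{(j)}$, and the paper's own proof glosses exactly this step: it writes the inner product as $\mathrm{tr}\big[(\mA^{(i)})^\tr\mX^\tr\mK\,\mX\mA^{(j)}\big]$ and invokes the antisymmetry $u^\tr\mK v=-v^\tr\mK u$ to conclude vanishing ``for all $\mX,\mA^{(i)},\mA^{(j)}$,'' which antisymmetry does not give when $\mA^{(i)}\neq\mA^{(j)}$; requiring the bilinear form to vanish for all pairs of inputs would force $\mK=\mathbf{0}$, a strictly stronger condition. So your restricted formulation (shifts compared at a common $(\mX,\vz)$, the regime relevant to composing adapters at a single denoising step) is what the symmetrization argument actually establishes, and making that hypothesis explicit is a correction the paper's proof itself needs rather than a weakness of your proposal.
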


\begin{proof}
Under the stated assumptions, the LoRA-induced representation shift for concept $c_i$ is
\begin{align*}
    \Delta \mO^{(i)} 
    &= \big(\mW_o^{(i)}\mW_v^{(i)} - \mW_o^{(0)}\mW_v^{(0)}\big)\mX \mA^{(i)} \\
    &= \Big((\mW_o^{(0)} + \Delta \mW_o^{(i)})(\mW_v^{(0)} + \Delta \mW_v^{(i)}) - \mW_o^{(0)}\mW_v^{(0)}\Big)\mX \mA^{(i)} \\
    &= \mM_i \mX \mA^{(i)},
\end{align*}
where
\[
    \mA^{(i)} = \sigma\left(\frac{(\mW_k^{(0)} \mX )^\tr \mW_q^{(0)} \vz^{(i)}}{\sqrt{d_e}} \right).
\]
Since we assume the attention map $\mA^{(i)}$ remains the same ($\mW_q$ and $\mW_k$ fixed), 
only $\mW_v$ and $\mW_o$ differ between concepts, and thus the output change due to LoRA is linear in $\mM^{(i)}$.
The Frobenius inner product between the shifts for concepts $i$ and $j$ is then
\begin{align*}
    \langle \Delta \mO^{(i)}, \Delta \mO^{(j)} \rangle_F
    &= \mathrm{tr}\left[ (\Delta \mO^{(i)})^\tr \Delta \mO^{(j)} \right] \\
    &= \mathrm{tr}\left[ (\mA^{(i)})^\tr \mX^\tr (\mM^{(i)})^\tr \mM^{(j)} \mX \mA^{(j)} \right].
\end{align*}
To guarantee this vanishes for all possible choices of $\mX,\, \mA^{(i)},\, \mA^{(j)}$, it is sufficient that $(\mM^{(i)})^\tr \mM^{(j)}$ is skew-symmetric:
\(
    (\mM^{(i)})^\tr \mM^{(j)} + (\mM^{(j)})^\tr \mM^{(i)} = \mathbf{0}.
\)
Indeed, for any vectors $u, v$,
\[
    u^\tr (\mM^{(i)})^\tr \mM^{(j)} v = - v^\tr (\mM^{(i)})^\tr \mM^{(j)} u.
\]
By the properties of the trace and bilinearity, this implies
\[
    \mathrm{tr}\left[ u^\tr (\mM^{(i)})^\tr \mM^{(j)} v \right] = 0,
\]
and so $\langle \Delta \mO^{(i)}, \Delta \mO^{(j)} \rangle_F = 0$ for all $i \neq j$.

Thus, the proposed parameter-space constraint is sufficient for input-agnostic orthogonality between LoRA-induced representation shifts. 
\end{proof}

This theorem establishes that LoRA orthogonality can be enforced through a symmetric condition on their parameter matrices, removing the need for input-dependent Jacobians or forward-pass correlations. Building on this result, we construct the input-agnostic orthogonality loss as
\begin{equation}
    \mathcal{L}_{\text{ortho}}^{\text{agnostic}}
= - \mathbb{E}_{(c_i,c_j) \sim \mathcal{C}_{\text{scope}},\, i \neq j}
\left[
    \left\| \tfrac{1}{2}\!\left( (\mM^{(i)})^{\tr}\mM^{(j)} + (\mM^{(j)})^{\tr}\mM^{(i)} \right) \right\|_F^2
\right].
\end{equation}

This loss encourages concept-specific LoRA modules to reside in decorrelated parameter subspaces, providing a lightweight and input-independent safeguard against crosstalk that complements the input-aware constraint.

\textbf{Final Training Objective.} The overall training objective for \proposed combines a reconstruction loss (erasing target concepts), the input-aware orthogonality loss (reducing input-specific crosstalk), and the input-agnostic orthogonality loss (providing global disentanglement):
\begin{equation}
    \mathcal{L} \;=\; \mathcal{L}_{\text{rec}} 
    \;+\; \lambda_1 \mathcal{L}_{\text{ortho}}^{\text{aware}} 
    \;+\; \lambda_2 \mathcal{L}_{\text{ortho}}^{\text{agnostic}}, 
\end{equation}
where $\lambda_1$ and $\lambda_2$ control the relative strength of the two constraints.
Here, $\gL_{\text{rec}}$ is a distance between the generated image and its neutral substitute, ensuring erasing effectiveness of target concepts, $\gL_{\text{ortho}}^{\text{aware}}$ mitigates sample-specific interference, and $\gL_{\text{ortho}}^{\text{agnostic}}$ enforces global disentanglement. Together, they ensure LoRAs are effective individually and stable when composed.

\subsection{Dynamic Composition at Inference}
At inference, \proposed identifies the erasure subset $\mathcal{C}_{\text{subset}}$ for each prompt $\bm{p}$. Only the corresponding LoRA modules are activated, and their classifier-free guidance~\citep{ho2022classifier} predictions are computed and averaged to yield a unified denoising direction.
This enables on-demand erasure of arbitrary concept subsets, without retraining or storing multiple static checkpoints. Crucially, thanks to bi-level orthogonality constraints, performance remains stable as the erasure scope grows.

\section{Experiments}
\subsection{Experimental Setups}

\textbf{Benchmark dataset.} To enable rigorous evaluation of multi-concept erasure under realistic concept relationships, we introduce \textsc{ErasureBench-H}, a \textbf{new} Hierarchical Benchmark for Concept Erasure to reflect the hierarchical and compositional nature of concepts.
While prior evaluations~\citep{lu2024mace,fan2024salun,zhao2024separable,li2025sculpting} often rely on datasets such as CIFAR-10 and Imagenette,
these datasets treat concepts as flat, disjoint categories and therefore cannot capture the complexity of large-scale erasure involving overlapping or nested concepts. 
\textsc{ErasureBench-H} addresses this gap by organizing concepts in a \emph{brand–series–character} hierarchy, which reflects the way unlearning requests often target groups of related concepts rather than isolated categories. This structure enables evaluation across different \textit{concept scopes}, from broad brand-level suppression to fine-grained character-level erasure. The complete taxonomy, statistics, and curation process are detailed in Appendix~\ref{app:bench}.

\textbf{Baseline and evaluation.}
We benchmark against \emph{static erasure} models, ESD~\citep{gandikota2023erasing}, AC~\citep{kumari2023ablating}, FMN~\citep{zhang2024forget}, MACE~\citep{lu2024mace}, SPM~\citep{lyu2024one}, and SalUn~\citep{fan2024salun}. 
Implementation details are provided in Appx.~\ref{imple-details}.
\label{exp:Evaluation Metrics}
In our performance
evaluation, we report four metrics:
(i) \textit{Erasing Effectiveness Accuracy} (\(\mathrm{Acc}_{\mathrm{EE}}\)): the rate of generated images still classified as containing the erased concept(s) by a CLIP-based classifier (lower is better);
(ii) \textit{Utility Preservation Accuracy} (\(\mathrm{Acc}_{\mathrm{UP}}\)): the rate of non-target concepts preserved in generation (higher is better);
(iii) \textit{Image Fidelity}: FID~\citep{c:64} computed on all generated images against MS-COCO~\citep{lin2015microsoftcococommonobjects};
(iv) \textit{Harmonic Accuracy}, which combines \(\mathrm{Acc}_{\mathrm{EE}}\) and \(\mathrm{Acc}_{\mathrm{UP}}\) to penalize degenerate solutions, 
\(\mathrm{Acc}_{\mathrm{harmonic}} = \frac{2}{\frac{1}{1-\mathrm{Acc}_{\mathrm{EE}}} + \frac{1}{\mathrm{Acc}_{\mathrm{UP}}}}\).

\textbf{Multi-concept erasure settings.}
We consider two key scenarios:
\textbf{(1) {Erasure Scope} Scaling.}
In Sec.~\ref{sec:experiment:preliminary}, we adopt the classic multi-concept erasure scenario used in prior work: the model is trained to erase an increasing number of concepts while each generation involves only a single target concept. This setting measures robustness as the erasure scope grows to large scale.
\textbf{(2) Per-Generation {Erasure Subset} Scaling.}
This setting evaluates performance when multiple concepts must be erased within a single generation, fixing the trained erasure scope and increasing the number of simultaneously erased concepts per generation. We realize this in two complementary ways in Sec.~\ref{sec:experiment:main:conj}. 
(i) \emph{Conjunctions}: we construct prompts by concatenating \(N\) targets with commas and conjunctions; for example, when the per-generation erasure subset has size 3, the prompt is \textit{``a photo of the beaver, dolphin, and otter''}.
(ii) \emph{Concept scope expansion}: leveraging \textsc{ErasureBench-H}, we target higher-level semantic concepts (series- and brand-level) that aggregate multiple sub-concepts (character-level). In this case, the per-generation erasure subset size equals the concept’s scope (the number of constituent unit concepts), ranging from 1 up to 62. These scenarios directly test a model’s ability to dynamically compose LoRA modules without interference.

\subsection{Erasure Performance under Expanding Erasure Scope}
\label{sec:experiment:preliminary}

To demonstrate the limitations of static erasure methods, we evaluate their performance as the erasure scope (\ie, the total number of concepts erased) increases. We adopt CIFAR-100 as the evaluation dataset, treating each of its $100$ classes as an individual concept. Models are trained to erase \(\{5,\,10,\,15,\,20\}\) concepts, but each test case involves only a single target concept per-generation (\ie, erasure subset size is 1). 

As a concrete example, when the per-generation erasure subset contains exactly one of the first five CIFAR-100 classes (beaver, dolphin, otter, seal, whale), we compute \(\text{Acc}_{\text{UP}}\) by evaluating it on each of the latter 50 CIFAR-100 concepts and taking the mean. In this setting, \proposed achieves an average Harmonic Accuracy of \textbf{90.82\%} across the five single-concept erasures. Because inference is dynamic, this performance remains essentially unchanged as the total erasure scope grows while static baselines deteriorate markedly (see Fig.~\ref{Fig:dynamic erasure advantages}~(left)). Moreover, in detailed results, ESD and AC increasingly render generations semantically meaningless as the scope expands, which causes a large drop in \(\mathrm{Acc}_{\mathrm{UP}}\); FMN tends to retain concepts regardless of whether they are to be erased, which causes a large drop in \(\text{Acc}_{\text{EE}}\). When the erasure scope reaches 20, these methods fall to around \textbf{30\%} Harmonic Accuracy, far below \proposed. Detailed results are provided in Table~\ref{exp:pre1}.

\subsection{Erasure Performance under Expanding Erasure Subset}
\label{sec:experiment:main}
\textbf{Increasing per-generation erasure subset by conjunctions.}
\label{sec:experiment:main:conj}
Having examined scalability with respect to the erasure scope, we now turn to the complementary setting that requires composing multiple LoRA adapters per generation. As defined in Sec.~\ref{sec:experiment:preliminary}, we evaluate multi-concept erasure by growing the per-generation subset size. For each $N\in\{2,3,4,5\}$, we follow the benchmark’s canonical class order, partition classes into contiguous 5-tuples, and for each tuple take the first $N$ classes as the target set---thus the targets for $N=2,3,4,5$ are nested (prefixes of the same ordered 5-tuple). We generate 200 images per method for every target set. To keep static-erasure baselines comparable, their trained erasure scope is capped at five concepts; larger scopes cause collapse that masks method differences. Metrics follow Sec.~\ref{exp:Evaluation Metrics}: when calculating \(\text{Acc}_{\text{EE}}\), each generated image is counted as not erased if it contains at least one of the target concepts and when computing \(\text{Acc}_{\text{UP}}\), we randomly sample $N$ non-target concepts and count an image as preserved if its top-$N$ CLIP logits collectively contain all $N$ sampled non-targets. 

Table~\ref{exp:1} reports CIFAR-100 results as the per-generation erased subset grows from $N{=}2$ to $N{=}5$; Imagenette and \textsc{ErasureBench-H} exhibit the same trend, with detailed reports in Appx.~\ref{app:exp2-results}. Across static baselines, harmonic accuracy declines monotonically with larger $N$, reflecting an increasing failure to isolate target concepts---manifested as target leakage (higher $\text{Acc}_{\text{EE}}$) or over-suppression (lower $\text{Acc}_{\text{UP}}$). In contrast, \proposed maintains substantially higher harmonic accuracy for all $N$. To attribute the gains, an ablated training configuration that disables our bi-level orthogonality (\proposed{} \textit{(w/o ortho)} in Table~\ref{exp:1}) performs markedly worse, underscoring the role of orthogonality in achieving stable composition. We include this ablation solely to isolate the effect of orthogonality; it is \textit{not} a proper usage of \proposed. Overall, as $N$ increases, DyME yields stronger multi-target erasure with better erasure effectiveness and utility preservation. Representative qualitative cases for this setting are provided in Appx.~\ref{Qualitative comparison}.

\begin{table*}[!t]
\centering
\resizebox{\textwidth}{!}{
\begin{tabular}{ccccccccccccc|l}
\hline
\multirow{2}{*}{Method}           & \multicolumn{3}{c}{2-concept}                                                                                            & \multicolumn{3}{c}{3-concept}                                                                                            & \multicolumn{3}{c}{4-concept}                                                                                            & \multicolumn{3}{c|}{5-concept}                                                                                           & \multicolumn{1}{c}{\multirow{2}{*}{FID\(\downarrow\)}} \\ \cline{2-13}
                                  & \(\text{Acc}_{\text{EE}}\downarrow\) & \(\text{Acc}_{\text{UP}}\uparrow\) & \(\text{Acc}_{\text{harmonic}}\)\(\uparrow\) & \(\text{Acc}_{\text{EE}}\downarrow\) & \(\text{Acc}_{\text{UP}}\uparrow\) & \(\text{Acc}_{\text{harmonic}}\)\(\uparrow\) & \(\text{Acc}_{\text{EE}}\downarrow\) & \(\text{Acc}_{\text{UP}}\uparrow\) & \(\text{Acc}_{\text{harmonic}}\)\(\uparrow\) & \(\text{Acc}_{\text{EE}}\downarrow\) & \(\text{Acc}_{\text{UP}}\uparrow\) & \(\text{Acc}_{\text{harmonic}}\)\(\uparrow\) & \multicolumn{1}{c}{}                                   \\ \hline
SD (Original)                     & 98.50                                & 70.50                              & -                                            & 99.50                                 & 64.00                              & -                                            & 100.00                               & 37.50                              & -                                            & 100.00                               & 25.50                              & -                                            & 98.54                                                  \\
MACE                              & 12.00                                & 27.50                              & 41.90                                        & 17.00                                & 20.50                              & 32.88                                        & 17.50                                & 15.50                              & 26.10                                        & 14.50                                & 11.50                              & 20.27                                        & 117.26                                                 \\
SPM                               & 32.50                                & 60.50                              & 63.81                                        & 36.00                                & 33.00                              & 43.55                                        & 60.00                                & 21.50                              & 27.97                                        & 53.00                                & 22.00                              & 29.97                                        & 107.90                                                 \\
SalUn                             & 5.50                                 & 41.00                              & 57.19                                        & 11.50                                & 42.50                              & 57.42                                        & 15.00                                & 8.00                               & 14.62                                        & 12.50                                & 9.00                               & 16.32                                        & 119.38                                                 \\
\proposed~w/o~ortho & 40.00                                & 70.50                              & 64.83                                        & 47.00                                & 64.00                              & 57.98                                        & 67.00                                & 37.50                              & 35.11                                        & 58.50                                & 25.50                              & 31.59                                        & 112.44                                                 \\
\proposed          & \textbf{2.00}                                 & \textbf{70.50}                              & \textbf{82.01}                               & \textbf{4.00}                        & \textbf{64.00}                     & \textbf{76.80}                                & \textbf{7.00}                        & \textbf{37.50}                     & \textbf{53.45}                               & \textbf{6.00}                        & \textbf{25.50}                     & \textbf{40.12}                               & 109.91                                                 \\ \hline
\end{tabular}
}
\caption{Multi-concept erasure performance as per-generation erasure set increases by conjunctions on the CIFAR100 dataset. }
\label{exp:1}
\vspace{-1em}
\end{table*}

\textbf{Increasing per-generation erasure subset via concept scope expansion.}
\label{sec:experiment:main:conceptscope}
Beyond explicitly enlarging the subset via conjunction prompts (Sec.~\ref{sec:experiment:main:conj}), real deployments also induce erasure subset growth implicitly when the requested concept has a broader \textbf{concept scope} (Appx.~\ref{app:case-study} illustrates this growth.). Using \textsc{ErasureBench-H}, we target higher-level concepts whose concept scope equals the number of constituent unit concepts. For each higher-level target, we generate 200 images per method and report results at the character-, series- and brand-levels: at the character level we average metrics over all characters, while at the series and brand levels we bucket targets into Small, Medium and Large by the empirical tertiles of concept scope computed separately per level. Metrics follow Sec.~\ref{exp:Evaluation Metrics}: \(\text{Acc}_{\text{EE}}\) counts an image as \emph{not erased} if at least one unit concept from the target appears according to the CLIP classifier; \(\text{Acc}_{\text{UP}}\) matches Sec.~\ref{sec:experiment:preliminary} at the character level (size of erasure subset \(=1\)) and, for series/brand levels, requires that the image’s top-5 CLIP logits all fall within the corresponding non-target series or brand when classified at unit-concept granularity.

Across all three concept scope levels (character, series, and brand) \proposed ranks first on harmonic accuracy (character-level: Table~\ref{exp:character}; series-level: Table~\ref{exp:series}; brand-level: Table~\ref{exp:brand}), indicating the best combination of erasure effectiveness and utility preservation when the per-generation subset grows via concept scope expansion. As the scope enlarges from a single character to an entire brand, the synthesis task becomes harder (the attainable $\mathrm{Acc}_{\mathrm{UP}}$ ceiling drops even for the underlying generator, Stable Diffusion), so relative gaps among methods and their degradation rates are more informative than differences in raw absolute values. By these criteria, \proposed consistently degrades most gracefully and maintains the strongest margins at all concept scopes, while keeping competitive fidelity.

\begin{table*}[!t]
\centering
\resizebox{\textwidth}{!}{
\begin{tabular}{cccccccccc}
\hline
\multirow{2}{*}{Method}  & \multicolumn{3}{c}{Series (small)}                                                                                       & \multicolumn{3}{c}{Series (medium)}                                                                                      & \multicolumn{3}{c}{Series (large)}                                                                                       \\ \cline{2-10} 
                         & \(\text{Acc}_{\text{EE}}\downarrow\) & \(\text{Acc}_{\text{UP}}\uparrow\) & \(\text{Acc}_{\text{harmonic}}\)\(\uparrow\) & \(\text{Acc}_{\text{EE}}\downarrow\) & \(\text{Acc}_{\text{UP}}\uparrow\) & \(\text{Acc}_{\text{harmonic}}\)\(\uparrow\) & \(\text{Acc}_{\text{EE}}\downarrow\) & \(\text{Acc}_{\text{UP}}\uparrow\) & \(\text{Acc}_{\text{harmonic}}\)\(\uparrow\) \\ \hline
SD (Original) & 90.50 & 72.50 & --    & 93.00 & 67.00 & --    & 95.50 & 51.50 & --    \\
MACE          &  11.50 & 19.00 & 31.28 & 7.50 & 18.00 & 30.14 & 5.00 & 7.00 & 13.04 \\
SPM           & 43.00 & 61.50 & 59.16 & 52.50 & 61.00 & 53.41 & 67.50 & 44.00 & 37.39 \\
SalUn           & 25.50 & 65.00 & 69.43 & 22.00 & 50.50 & 61.31 & 44.50 & 36.00 & 43.67 \\
\proposed          &  4.50 & 72.50 & \textbf{82.43} &  8.00 & 67.00 &  \textbf{77.53} & 17.50 & 51.50 & \textbf{63.41} \\ \hline
 \end{tabular}}
\caption{Series-level concept erasure performance on the \textsc{ErasureBench-H} dataset.}
\label{exp:series}
\vspace{-.5em}
\end{table*}
\begin{table*}[!t]
\centering
\resizebox{\textwidth}{!}{
\begin{tabular}{cccccccccc}
\hline
\multirow{2}{*}{Method}  & \multicolumn{3}{c}{Brand (small)}                                                                                       & \multicolumn{3}{c}{Brand (medium)}                                                                                      & \multicolumn{3}{c}{Brand (large)}                                                                                       \\ \cline{2-10} 
                         & \(\text{Acc}_{\text{EE}}\downarrow\) & \(\text{Acc}_{\text{UP}}\uparrow\) & \(\text{Acc}_{\text{harmonic}}\)\(\uparrow\) & \(\text{Acc}_{\text{EE}}\downarrow\) & \(\text{Acc}_{\text{UP}}\uparrow\) & \(\text{Acc}_{\text{harmonic}}\)\(\uparrow\) & \(\text{Acc}_{\text{EE}}\downarrow\) & \(\text{Acc}_{\text{UP}}\uparrow\) & \(\text{Acc}_{\text{harmonic}}\)\(\uparrow\) \\ \hline
SD (Original) & 86.50 & 61.50 & - & 90.00 & 10.50 & - & 92.00 & 7.50 & - \\
MACE          &  13.00 &  23.50 & 37.00 &  3.90 &  3.50 &  6.75 &  4.50 & 2.50 &  4.87 \\
SPM           & 43.00 & 55.50 & 56.24 & 23.00 & 4.50 & 8.50 & 60.50 & 6.00 &  10.42 \\
SalUn           & 37.50 & 45.50 & 52.66 & 30.50 & 5.00 & 9.33 & 49.00 & 3.50 &  6.55 \\
DyME          &  6.50 & 61.50 & \textbf{74.2} & 24.50 & 10.50 & \textbf{18.44} & 51.00 &  7.50 & \textbf{13.01} \\ \hline
\end{tabular}}
\caption{Brand-level concept erasure performance on the \textsc{ErasureBench-H} dataset.}
\label{exp:brand}
\vspace{-1.5em}
\end{table*}

\subsection{Ablation Study}
\label{sec:exp:ablation}

To assess the contribution of each design component, we ablate three choices—(i) dynamic LoRA composition at inference (LoRA-C), (ii) the input-aware orthogonality constraint, and (iii) the input-agnostic orthogonality constraint. We evaluate all configurations under the conjunction-based \textbf{erasure subset} scaling setting (Sec.~\ref{sec:experiment:main:conj}) to ensure that multiple adapters must be composed per generation.

\begin{wraptable}[13]{r}{0.5\linewidth} %
\centering
\captionsetup{font=footnotesize}         %
\vspace{-1em}
\centering
\renewcommand{\arraystretch}{1.25}
\setlength{\tabcolsep}{4.5pt}
\resizebox{0.5\columnwidth}{!}{
\begin{tabular}{cccccccc}
\toprule
\multirow{2}{*}{Config} 
& \multirow{2}{*}{LoRA-C} 
& \multicolumn{3}{c}{Orthogonality components} 
& \multicolumn{3}{c}{Metrics} \\
\cmidrule(r){3-5} \cmidrule(l){6-8}
& & $\mathcal{L}_{\text{Ortho}}^{\text{In-Aware}}$ & $\mathcal{L}_{\text{Ortho}}^{\text{In-Ag}}$ & PBO 
& $\text{Acc}_{\text{EE}}\downarrow$ & $\text{Acc}_{\text{UP}}\uparrow$ & $\text{Acc}_{\text{harmonic}}\uparrow$ \\
\midrule
1 & - & \checkmark & \checkmark & --     & 53.25 & 70.50 & 56.22 \\
2 & \checkmark   & --         & \checkmark & --     & 34.50 & 70.50 & 67.91 \\
3 & \checkmark   & \checkmark & --         & --     & 8.50 & 70.50 & 79.64 \\
4 & \checkmark & -- & -- & --     & 40.00 & 70.50 & 64.83 \\
5 & \checkmark   & -- & --         & \checkmark & 19.00 & 70.50 & 75.39 \\
6 & \checkmark   & \checkmark & --         & \checkmark & 8.50 & 70.50 & 79.64 \\
\midrule
\proposed & \checkmark & \checkmark & \checkmark & -- & \textbf{2.00} & \textbf{70.50} & \textbf{82.01} \\
\bottomrule
\end{tabular}
}
\caption{\textbf{Ablation on CIFAR-100.} LoRA-C: LoRA composition. $\mathcal{L}_{\text{Ortho}}^{\text{In-Aware}}$: input-aware orthogonality constraint. ${L}_{\text{Ortho}}^{\text{In-Ag}}$: input-agnostic orthogonality constraint. PBO: parameter space B orthogonality constraint.}
\label{tab:ablation_placeholder}
\end{wraptable}
We begin with Config.~1, probing whether LoRA-C itself is essential. We replace LoRA-C with two other schemes, LoRA Merge and LoRA Switch, and report the mean across them while keeping both orthogonality terms intact. This substitution leads to a marked degradation, which is consistent with reported scalability limitations of these earlier LoRA combination techniques~\citep{zhang2023composing,zhong2024multi}.
Next, we isolate the role of each orthogonality constraint in turn. Removing only the input-aware term (Config.~2) while retaining the input-agnostic term reveals how much benefit arises from the input-aware orthogonality constraint; symmetrically, removing only the input-agnostic term (Config.~3) exposes the contribution of it. To gauge the necessity of enforcing both simultaneously, we also consider a no-orthogonality setting (Config.~4) in which neither term is applied. Across these variants, removing or weakening either pathway reduces performance; fully removing the bi-level orthogonality yields the largest drop (see Table~\ref{exp:1} for more details), and in this task the input-aware constraint contributes more than the input-agnostic term. Finally, we compare our representation-space constraints to a parameter-space alternative inspired by orthogonal adaptation~\citep{po2024orthogonal}. Specifically, for each corresponding LoRA layer and each concept pair, we enforce zero inner product between the \(B\) matrices, \(B_i^\top B_j = 0\); we refer to this as the parameter-space \(B\) orthogonality (PBO). We evaluate PBO as a full replacement for our bi-level orthogonality and also as a hybrid in which PBO substitutes only for the input-aware constraint. While orthogonal adaptation is helpful, it ignores inter-layer interactions within cross-attention; empirically, enforcing matrix-level \(B\)-orthogonality provides some gains for concept erasure but remains inferior to our bi-level orthogonality constraints.
Results in Table~\ref{tab:ablation_placeholder} show that replacing dynamic composition hurts performance, and that bi-level orthogonality is the largest single contributor to multi-concept erasure task, with the input-aware term especially impactful.

\section{Conclusion}
We presented DYME, a dynamic multi-concept erasure framework for text-to-image diffusion models that reframes concept erasure as an on-demand, modular capability. By training concept-specific LoRA modules with bi-level orthogonality constraints, DYME enables composable multi-concept erasure, even as the number or granularity of targeted concepts increases. Extensive experiments on both standard (CIFAR-100, Imagenette) and newly introduced hierarchical benchmarks (ERASUREBENCH-H) demonstrate that DYME achieves significantly higher erasure effectiveness and utility preservation than existing approaches, while scaling to large, realistic erasure scenarios.

Our results highlight the importance of moving beyond static fine-tuning toward dynamic, inference-time control, and show that principled disentanglement in both feature and parameter spaces is critical for robust multi-concept erasure. We hope DYME and ERASUREBENCH-H will facilitate further progress toward practical, scalable safeguards in generative modeling.

\newpage
\section*{Ethics statement}
This work studies text-to-image diffusion models by \emph{removing} copyrighted concepts, with the goal of reducing legal and policy risk in real deployments.

\paragraph{Dataset release.}
We release \textsc{ErasureBench-H} as a \emph{CSV taxonomy only}, containing about 300 unit-concept names (i.e., character names). It includes \emph{no images, audio, video, bios, or identifiers}, and thus does not contain personal data or sensitive attributes. Names and groupings are used purely as string labels for research on concept erasure. We do not distribute any copyrighted media. Trademarks, if mentioned, are for referential purposes; we will honor legitimate takedown requests.

\paragraph{Image generation protocol.}
To evaluate erasure quality without degenerate all-black outputs, we temporarily disabled the Stable Diffusion safety checker \emph{during controlled offline experiments}. Prompts excluded sexually explicit, violent, or otherwise sensitive content. All generated images were used solely to compute aggregate metrics and were deleted after the experiments; we do not redistribute generated samples. Any released code/configurations will keep the safety checker \emph{enabled by default}.

\paragraph{Other ethics topics.}
This study does not involve human subjects, user data, or personally identifiable information; no IRB was required. We disclose no conflicts of interest or external sponsorship. The work does not aim to enable harmful applications; rather, it provides technical means to \emph{restrict} the generation of copyrighted or disallowed content. We are committed to lawful, policy-compliant use of third-party models and datasets and to accurate documentation of our methods and results.

\section*{Reproducibility statement}
We have taken several steps to facilitate reproducibility. Implementation details for our method and all baselines (optimizer, learning rates, LoRA ranks/scales, training steps, sampler schedules, and composition rules) are documented in Appx.~\ref{imple-details}, with evaluation metrics defined in Sec.~\ref{exp:Evaluation Metrics}. The full taxonomy and curation protocol for \textsc{ErasureBench-H} are provided in Appx.~\ref{app:bench}; the dataset itself is a CSV taxonomy (no images) and will be released publicly together with our cleaned codebase after submission. The code release will include configuration files that reproduce the main tables and figures, as well as the exact random seeds used to generate quantitative results and qualitative samples. Where applicable, we also provide scripts to regenerate tables/plots from saved predictions to decouple heavy compute from post-processing.

\newpage
\bibliography{iclr2026_conference}
\bibliographystyle{iclr2026_conference}

\newpage
\appendix
\section{Appendix}
\subsection{Background}
\label{app:prelim}
\subsubsection{ Latent Diffusion Models (LDMs)}
\label{app:ldm}
Latent diffusion models (LDMs) perform the diffusion process in a compressed latent space rather than pixel space. 
Let \(\mathcal{E}\) and \(\mathcal{D}\) denote the encoder and decoder of a pretrained autoencoder (e.g., VAE), mapping images \(\bm{x}\) to latents \(\mathbf{z}=\mathcal{E}(\bm{x})\) and back \(\hat{\bm{x}}=\mathcal{D}(\mathbf{z})\).
The forward (noising) process constructs a sequence \(\{\mathbf{z}_t\}_{t=0}^T\) by progressively adding Gaussian noise, while the reverse (denoising) process is learned via a conditional denoiser
\(
\boldsymbol{\epsilon}_\theta(\mathbf{z}_t, t, \mathbf{p}),
\)
which predicts the noise at timestep \(t\) given the latent \(\mathbf{z}_t\) and a text prompt embedding \(\mathbf{p}\).
Starting from \(\mathbf{z}_T \sim \mathcal{N}(\mathbf{0}, \mathbf{I})\), iterative updates using \(\boldsymbol{\epsilon}_\theta\) produce \(\mathbf{z}_0\), which is then decoded to an image \(\hat{\bm{x}}_0=\mathcal{D}(\mathbf{z}_0)\).

A common training objective is the (weighted) noise-prediction loss:
\[
\mathcal{L}_{\text{diff}}(\theta)
\;=\;
\mathbb{E}_{\bm{x},\,\mathbf{p},\,t,\,\boldsymbol{\epsilon}}
\Bigl[
\bigl\|
\boldsymbol{\epsilon}
-
\boldsymbol{\epsilon}_\theta(\mathbf{z}_t, t, \mathbf{p})
\bigr\|_2^2
\Bigr],
\quad
\text{where }\;
\mathbf{z}_t = \alpha_t\,\mathcal{E}(\bm{x}) + \sigma_t\,\boldsymbol{\epsilon},\;
\boldsymbol{\epsilon}\sim\mathcal{N}(\mathbf{0},\mathbf{I}).
\]
Here \(\alpha_t\) and \(\sigma_t\) come from the noise schedule. 
Conditioning on \(\mathbf{p}\) enables text-guided generation; classifier-free guidance and various samplers (e.g., DDIM) are typically used at inference to trade off fidelity and diversity.

\subsubsection{Cross-Attention in T2I Models}
\label{app:lora-param}
Cross-attention integrates textual context into visual latents within the U-Net backbone. 
Given an input hidden state \(\mathbf{H}\!\in\!\mathbb{R}^{n\times d}\) (from the image pathway) and a text embedding matrix \(\mathbf{T}\!\in\!\mathbb{R}^{m\times d}\), the module applies learned projections:
\[
\mathbf{Q}=\mathbf{H}\,W_q,\qquad
\mathbf{K}=\mathbf{T}\,W_k,\qquad
\mathbf{V}=\mathbf{T}\,W_v,
\]
where \(W_q, W_k, W_v \in \mathbb{R}^{d\times d}\) are the query, key, and value matrices. 
Attention weights and outputs are computed as
\[
\mathrm{Attn}(\mathbf{H},\mathbf{T})
=
\mathrm{softmax}\!\Bigl(\tfrac{\mathbf{Q}\mathbf{K}^\top}{\sqrt{d}}\Bigr)\mathbf{V}
\;\;W_o,
\]
with an output projection \(W_o\in\mathbb{R}^{d\times d}\).
In multi-head settings, these computations are performed head-wise and concatenated before \(W_o\).
Because cross-attention mixes text-derived \((W_k, W_v)\) information with image-derived queries \((W_q)\) in shared layers, it is a primary site where multiple adapters (e.g., LoRA modules) can interact—motivating our later analysis of crosstalk and orthogonality constraints.

\subsection{Datasets and Implementation Details.}

\subsubsection{\textsc{ErasureBench-H} dataset.}
\label{app:bench}

\begin{table*}[h]
\centering
\resizebox{0.5\textwidth}{!}{
\begin{tabular}{|ccc|}
\hline
\multicolumn{3}{|c|}{\textbf{Concept Scope}}                                                                                        \\ \hline
\multicolumn{1}{|c|}{\textbf{Brands}}         & \multicolumn{1}{c|}{\textbf{Series}}                         & \textbf{Characters}  \\ \hline
\multicolumn{1}{|c|}{\multirow{9}{*}{Disney}} & \multicolumn{1}{c|}{\multirow{3}{*}{Mickey Mouse Clubhouse}} & Mickey Mouse         \\ \cline{3-3} 
\multicolumn{1}{|c|}{}                        & \multicolumn{1}{c|}{}                                        & Minnie Mouse         \\ \cline{3-3} 
\multicolumn{1}{|c|}{}                        & \multicolumn{1}{c|}{}                                        & \dots \\ \cline{2-3} 
\multicolumn{1}{|c|}{}                        & \multicolumn{1}{c|}{\multirow{3}{*}{Duffy and Friends}}      & Duffy                \\ \cline{3-3} 
\multicolumn{1}{|c|}{}                        & \multicolumn{1}{c|}{}                                        & ShellieMay           \\ \cline{3-3} 
\multicolumn{1}{|c|}{}                        & \multicolumn{1}{c|}{}                                        & \dots \\ \cline{2-3} 
\multicolumn{1}{|c|}{}                        & \multicolumn{1}{c|}{\multirow{3}{*}{The Lion King}}          & Simba                \\ \cline{3-3} 
\multicolumn{1}{|c|}{}                        & \multicolumn{1}{c|}{}                                        & Timon                \\ \cline{3-3} 
\multicolumn{1}{|c|}{}                        & \multicolumn{1}{c|}{}                                        & \dots \\ \hline
\multicolumn{1}{|c|}{\multirow{5}{*}{DC}}     & \multicolumn{1}{c|}{\multirow{3}{*}{Justice League}}         & Batman               \\ \cline{3-3} 
\multicolumn{1}{|c|}{}                        & \multicolumn{1}{c|}{}                                        & Wonder Woman         \\ \cline{3-3} 
\multicolumn{1}{|c|}{}                        & \multicolumn{1}{c|}{}                                        & \dots \\ \cline{2-3} 
\multicolumn{1}{|c|}{}                        & \multicolumn{1}{c|}{\multirow{2}{*}{Shazam!}}                & Shazam               \\ \cline{3-3} 
\multicolumn{1}{|c|}{}                        & \multicolumn{1}{c|}{}                                        & \dots \\ \hline
\end{tabular}
}
\caption{Hierarchy overview of \textsc{ErasureBench-H}  ($27$ brands, $73$ series, $300$ characters)}
\label{Fig:hierarchy}
\end{table*}

As illustrated in Table~\ref{Fig:hierarchy}\, \textsc{ErasureBench-H} organizes concepts in a brand–series–character hierarchy, supporting evaluation across semantic scopes and multi-level composition. For example, brands (e.g., Disney, Warner Bros., DC) decompose into series (e.g., \emph{The Lion King}, \emph{Mickey Mouse Clubhouse}, \emph{Looney Tunes}), which in turn decompose into characters (e.g., Simba, Mickey, Bugs Bunny, Daffy Duck). In total, the benchmark comprises 27 brands, 73 series, and 300 character-level unit concepts. We define a concept’s scope as the number of unit concepts it subsumes.

The hierarchical structure serves two key purposes: (1) It enables controlled evaluation of erasure at multiple semantic levels, allowing us to test models' ability to erase high-level collective concepts (\eg, ``Disney character'') that implicitly refer to multiple sub-concepts. (2) It supports structured analysis of semantic overlap, subset composition, and the scalability of erasure mechanisms under concept entanglement. Unlike existing flat-label datasets, \textsc{ErasureBench-H} is specifically constructed to capture the compositional complexity of real-world content and the challenges it poses for scalable concept erasure, facilitating systematic testing under both single- and multi-concept erasure settings.

\subsubsection{Implementation Details.}
\label{imple-details}
All selected baselines have official, publicly available implementations and expose interfaces that support our multiple concepts erasure setting. 
For completeness, Sec.~\ref{sec:experiment:preliminary} reports a comprehensive comparison across all baselines. 
Methods that are markedly underperforming in this setting, consistent with prior reports~\citep{zhao2024separable,zhang2024unlearncanvas,li2025sculpting}, are not carried forward to more complex studies. 
Accordingly, Sec.~\ref{sec:experiment:main} focuses on the strongest static baselines (MACE~\citep{lu2024mace}, SPM~\citep{lyu2024one}, and SalUn~\citep{fan2024salun}). 

All models are built on Stable Diffusion v1.4 and fine-tuned using a 50-step DDIM sampler~\citep{c:62}. Each concept-specific LoRA is trained for \textbf{20 epochs}. For the orthogonality constraints, we compute pairwise orthogonality scores across all LoRA modules and, considering computational efficiency, randomly draw 50 LoRA pairs per update. We follow the standard prompt template used in prior work, \texttt{a photo of the \{target concepts\}}, which makes the per-generation \emph{erasure subset} explicit and easy to identify. Baseline methods are trained with their default configurations as reported in their respective papers. Unless otherwise stated, training uses Adam with a learning rate of $1\times10^{-5}$ and mini-batches of size $4$. Unless otherwise noted, each per-concept adapter uses rank $r{=}8$ with $\alpha{=}r$ (effective scale $\alpha/r{=}1$), dropout $=0$, and base weights are frozen (no LoRA on the text encoder), biases are not trained, and LoRA parameters are initialized from $\mathcal{N}(0,10^{-4})$.

\subsection{Additional Results for Multi-Concept Erasure}
\subsubsection{Additional results~(per concept) for erasure scope scaling on CIFAR-100}
\label{app:exp1-desc}
\begin{table*}[t]
\centering
\renewcommand{\arraystretch}{1.3}
\setlength{\tabcolsep}{6pt}
\resizebox{\textwidth}{!}{
\begin{tabular}{ccccccccccccc}
\hline
\multirow{2}{*}{Method}   & \multirow{2}{*}{Erasure Scope size} & \multicolumn{5}{c}{\(\mathrm{Acc}_{\text{EE}}\downarrow\)}                                                             & \multirow{2}{*}{\(\mathrm{Acc}_{\text{UP}}\uparrow\)} & \multicolumn{5}{c}{\(\mathrm{Acc}_{\text{harmonic}}\uparrow\)}                                                             \\ \cline{3-7} \cline{9-13} 
                          &                                     & beaver                & dolphin                & otter                & seal                  & whale                  &                                                       & beaver                 & dolphin                & otter                  & seal                   & whale                  \\ \hline
\multirow{4}{*}{ESD}      & 5                                   & 7.50                  & 23.00                  & 21.50                & 4.50                  & 11.50                  & 67.13                                                 & 77.80                  & 71.73                  & 72.37                  & 78.84                  & 76.35                  \\
                          & 10                                  & 4.00                  & 10.50                  & 7.50                 & 3.50                  & 4.00                   & 35.47                                                 & 51.80                  & 50.81                  & 51.28                  & 51.87                  & 51.80                  \\
                          & 15                                  & 2.50                  & 1.50                   & 2.00                 & 2.50                  & 4.00                   & 10.83                                                 & 19.49                  & 19.51                  & 19.50                  & 19.49                  & 19.46                  \\
                          & 20                                  & 0.00                  & 0.00                   & 2.00                 & 1.00                  & 1.00                   & 5.22                                                  & 9.92                   & 9.92                   & 9.91                   & 9.92                   & 9.92                   \\ \hline
\multirow{4}{*}{AC}       & 5                                   & 87.50                 & 89.00                  & 80.00                & 88.50                 & 95.50                  & 88.19                                                 & 21.90                  & 19.56                  & 32.61                  & 20.35                  & 8.56                   \\
                          & 10                                  & 94.50                 & 92.00                  & 87.50                & 90.00                 & 94.00                  & 82.76                                                 & 10.31                  & 14.59                  & 21.72                  & 17.84                  & 11.19                  \\
                          & 15                                  & 92.00                 & 88.50                  & 83.50                & 93.50                 & 92.00                  & 87.11                                                 & 14.65                  & 20.32                  & 27.74                  & 12.10                  & 14.65                  \\
                          & 20                                  & 91.00                 & 91.50                  & 90.00                & 96.00                 & 94.50                  & 83.88                                                 & 16.25                  & 15.43                  & 17.87                  & 7.64                   & 10.32                  \\ \hline
\multirow{4}{*}{FMN}      & 5                                   & 76.00                 & 63.00                  & 83.00                & 55.00                 & 78.00                  & 86.45                                                 & 37.57                  & 51.82                  & 28.41                  & 59.19                  & 35.07                  \\
                          & 10                                  & 80.50                 & 67.00                  & 87.00                & 59.00                 & 82.00                  & 79.85                                                 & 32.00                  & 46.72                  & 22.36                  & 54.20                  & 29.38                  \\
                          & 15                                  & 81.00                 & 68.00                  & 88.50                & 60.50                 & 83.00                  & 76.01                                                 & 30.40                  & 45.04                  & 20.73                  & 52.42                  & 27.79                  \\
                          & 20                                  & 79.50                 & 66.00                  & 86.00                & 58.00                 & 81.50                  & 80.00                                                 & 33.27                  & 47.72                  & 23.83                  & 55.08                  & 30.71                  \\ \hline
\multirow{4}{*}{SPM}      & 5                                   & 11.00                 & 17.00                  & 15.00                & 14.00                 & 16.50                  & 87.30                                                 & 88.14                  & 85.10                  & 86.13                  & 86.65                  & 85.36                  \\
                          & 10                                  & 17.00                 & 21.50                  & 14.00                & 10.50                 & 10.00                  & 88.07                                                 & 85.46                  & 83.01                  & 91.86                  & 88.78                  & 89.02                  \\
                          & 15                                  & 10.50                 & 25.00                  & 37.50                & 16.00                 & 22.00                  & 84.90                                                 & 87.14                  & 79.64                  & 72.00                  & 84.45                  & 81.30                  \\
                          & 20                                  & 38.50                 & 50.00                  & 67.50                & 34.50                 & 47.50                  & 90.12                                                 & 73.11                  & 64.32                  & 47.77                  & 75.86                  & 66.35                  \\ \hline
\multirow{4}{*}{SalUn}    & 5                                   & 8.00                  & 2.50                   & 5.00                 & 27.00                 & 20.50                  & 74.14                                                 & 82.11                  & 84.23                  & 83.28                  & 73.57                  & 76.73                  \\
                          & 10                                  & 3.00                  & 4.00                   & 0.00                 & 6.00                  & 17.00                  & 40.68                                                 & 57.32                  & 57.14                  & 57.83                  & 56.79                  & 54.60                  \\
                          & 15                                  & 1.50                  & 6.00                   & 0.50                 & 4.00                  & 12.00                  & 17.61                                                 & 29.88                  & 29.66                  & 29.92                  & 29.76                  & 29.35                  \\
                          & 20                                  & 0.00                  & 3.00                   & 3.50                 & 5.50                  & 4.00                   & 11.52                                                 & 20.66                  & 20.59                  & 20.58                  & 20.54                  & 20.57                  \\ \hline
\multirow{4}{*}{MACE}     & 5                                   & 1.00                  & 12.00                  & 0.00                 & 5.00                  & 22.00                  & 78.29                                                 & 87.44                  & 82.86                  & 87.82                  & 85.84                  & 78.14                  \\
                          & 10                                  & 1.00                  & 14.00                  & 4.50                 & 7.50                  & 17.00                  & 46.63                                                 & 63.40                  & 60.47                  & 62.66                  & 62.00                  & 59.71                  \\
                          & 15                                  & 1.50                  & 16.00                  & 8.50                 & 4.50                  & 12.00                  & 38.20                                                 & 55.05                  & 52.52                  & 53.90                  & 54.57                  & 53.27                  \\
                          & 20                                  & 1.00                  & 5.50                   & 4.00                 & 3.00                  & 2.00                   & 14.66                                                 & 25.54                  & 25.19                  & 25.44                  & 25.47                  & 25.50                  \\ \hline
\multirow{4}{*}{\proposed} & 5                                   & \multirow{4}{*}{1.00} & \multirow{4}{*}{13.00} & \multirow{4}{*}{0.50} & \multirow{4}{*}{6.00} & \multirow{4}{*}{22.00} & \multirow{4}{*}{\textbf{90.52}}                                & \multirow{4}{*}{\textbf{94.57}} & \multirow{4}{*}{\textbf{88.72}} & \multirow{4}{*}{\textbf{94.80}} & \multirow{4}{*}{\textbf{92.22}} & \multirow{4}{*}{\textbf{83.79}} \\
                          & 10                                  &                       &                        &                      &                       &                        &                                                       &                        &                        &                        &                        &                        \\
                          & 15                                  &                       &                        &                      &                       &                        &                                                       &                        &                        &                        &                        &                        \\
                          & 20                                  &                       &                        &                      &                       &                        &                                                       &                        &                        &                        &                        &                        \\ \hline
SD                        & 0                                   & 96.00                 & 97.50                  & 94.50                & 98.00                 & 98.00                  & 90.52                                                 & --                     & --                     & --                     & --                     & --                     \\ \hline
\end{tabular}}
\caption{\textbf{Erasure Scope scaling on CIFAR-100 (per-class view).} 
Per-generation erasure subset size is 1. 
“Erasure Scope size” denotes the number of concepts the model is trained to erase (the erasure scope). 
Columns list per-class \(\mathrm{Acc}_{\text{EE}}\) (lower is better), overall \(\mathrm{Acc}_{\text{UP}}\) (higher is better), and per-class \(\mathrm{Acc}_{\text{harmonic}}\) (higher is better). 
Evaluated on five CIFAR-100 classes: beaver, dolphin, otter, seal, whale. 
“SD” is the unmodified Stable Diffusion baseline (no erasure). 
Dashes indicate results not available or not applicable.}
\label{exp:pre1}
\vspace{-2em}
\end{table*}
This section reports per-class results for the scope-scaling study (Sec.~\ref{sec:experiment:preliminary}). 
The per-generation erasure subset size is fixed to 1, and the erasure scope size varies over \(\{5,10,15,20\}\). As shown in Table~\ref{exp:pre1}, we evaluate five CIFAR-100 classes (beaver, dolphin, otter, seal, whale). 
For each method and scope size we report erasing effectiveness accuracy \(\mathrm{Acc}_{\text{EE}}\) (lower is better), utility preservation accuracy \(\mathrm{Acc}_{\text{UP}}\) (higher is better), and their harmonic aggregate \(\mathrm{Acc}_{\text{harmonic}}\) (higher is better). 
Dashes indicate results that are not available or not applicable.

\subsubsection{Additional results for erasure subset scaling by conjunctions on imagenette and ERASUREBENCH-H}
\label{app:exp2-results}
This section reports per-benchmark results corresponding to Sec.~\ref{sec:experiment:main:conj}.
We follow the same protocol: for each benchmark’s canonical class order, we partition classes into contiguous 5-tuples and, for each subset size \(N\in\{2,3,4,5\}\), take the first \(N\) classes as the target set (prefix nesting). 
For every target set and method, we generate 200 images.
To keep static-erasure baselines comparable, their trained erasure scope is capped at five concepts (larger scopes collapse and obscure method differences).
Metrics are computed as in Sec.~\ref{exp:Evaluation Metrics}: \(\text{Acc}_{\text{EE}}\) counts an image as \emph{not erased} if it contains \emph{at least one} target concept, while \(\text{Acc}_{\text{UP}}\) requires the \emph{top-5} CLIP logits to collectively contain \emph{all five} corresponding non-target concepts; harmonic accuracy is then the harmonic mean of the two.

Table~\ref{appendix-exp2-results-imagenet} lists the Imagenette results and Table~\ref{appendix-exp2-results-EBH} lists the \textsc{ErasureBench-H} results. 
Qualitatively, both benchmarks mirror the trend observed on CIFAR-100: as \(N\) increases, baselines degrade, whereas \proposed maintains a clear advantage.
See Fig.~\ref{fig:appex-exp2} for a compact visualization of these trends.

\begin{figure}[h]
    \centering
    \includegraphics[width=0.5\linewidth]{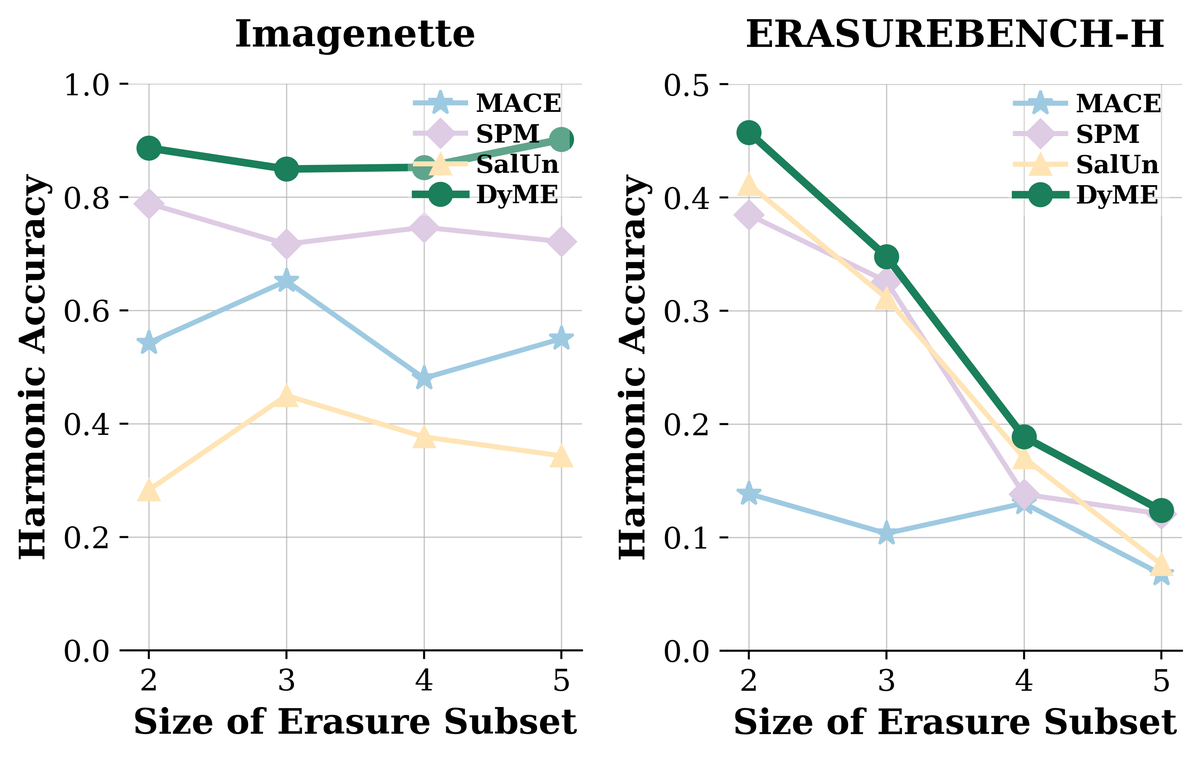}
    \caption{Erasure subset scaling by conjunction prompts on Imagenette and \textsc{ErasureBench-H}.}
    \label{fig:appex-exp2}

\end{figure}

\begin{table*}[h]
\centering
\resizebox{\textwidth}{!}{
\begin{tabular}{ccccccccccccc}
\hline
\multirow{2}{*}{Method}                            & \multicolumn{3}{c}{2-concept}                                                                                            & \multicolumn{3}{c}{3-concept}                                                                                            & \multicolumn{3}{c}{4-concept}                                                                                            & \multicolumn{3}{c}{5-concept}                                                                                            \\ \cline{2-13} 
                                                   & \(\text{Acc}_{\text{EE}}\downarrow\) & \(\text{Acc}_{\text{UP}}\uparrow\) & \(\text{Acc}_{\text{harmonic}}\)\(\uparrow\) & \(\text{Acc}_{\text{EE}}\downarrow\) & \(\text{Acc}_{\text{UP}}\uparrow\) & \(\text{Acc}_{\text{harmonic}}\)\(\uparrow\) & \(\text{Acc}_{\text{EE}}\downarrow\) & \(\text{Acc}_{\text{UP}}\uparrow\) & \(\text{Acc}_{\text{harmonic}}\)\(\uparrow\) & \(\text{Acc}_{\text{EE}}\downarrow\) & \(\text{Acc}_{\text{UP}}\uparrow\) & \(\text{Acc}_{\text{harmonic}}\)\(\uparrow\) \\ \hline
SD (Original)                                      & 92.50                                & 82.00                              & -                                            & 97.00                                & 76.50                              & -                                            & 94.00                                & 77.00                              & -                                            & 90.00                                & 83.50                              & -                                            \\
MACE                                               & 8.00                                 & 38.50                              & 54.28                                        & 4.50                                 & 49.50                              & 65.20                                        & 8.00                                 & 32.50                              & 48.03                                        & 9.50                                 & 39.50                              & 55.00                                        \\
SPM                                                & 10.50                                & 70.50                              & 78.87                                        & 18.50                                & 64.00                              & 71.70                                        & 14.00                                & 66.00                              & 74.68                                        & 16.50                                & 63.50                              & 72.14                                        \\
SalUn                                              & 26.00                                & 17.50                              & 28.31                                        & 21.50                                & 31.50                              & 44.96                                        & 18.50                                & 24.50                              & 37.67                                        & 22.00                                & 22.00                              & 34.32                                        \\
\proposed~w/o~ortho & 76.50                                & 82.00                              & 36.53                                        & 62.50                                & 76.50                              & 50.33                                        & 70.50                                & 77.00                              & 42.66                                        & 53.00                                & 83.50                              & 60.15                                        \\
\proposed          & \textbf{3.50}                        & \textbf{82.00}                     & \textbf{88.66}                               & \textbf{4.50}                        & \textbf{76.50}                     & \textbf{84.95}                               & \textbf{4.50}                        & \textbf{77.00}                     & \textbf{85.26}                               & \textbf{2.00}                        & \textbf{83.50}                     & \textbf{90.17}                               \\ \hline
\end{tabular}
}
\caption{Multi-concept erasure performance as per-generation erasure set increases by conjunctions on the Imagenette dataset. }
\label{appendix-exp2-results-imagenet}
\end{table*}

\begin{table*}[h]
\centering
\resizebox{\textwidth}{!}{
\begin{tabular}{ccccccccccccc}
\hline
\multirow{2}{*}{Method}                            & \multicolumn{3}{c}{2-concept}                                                                                            & \multicolumn{3}{c}{3-concept}                                                                                            & \multicolumn{3}{c}{4-concept}                                                                                            & \multicolumn{3}{c}{5-concept}                                                                                            \\ \cline{2-13} 
                                                   & \(\text{Acc}_{\text{EE}}\downarrow\) & \(\text{Acc}_{\text{UP}}\uparrow\) & \(\text{Acc}_{\text{harmonic}}\)\(\uparrow\) & \(\text{Acc}_{\text{EE}}\downarrow\) & \(\text{Acc}_{\text{UP}}\uparrow\) & \(\text{Acc}_{\text{harmonic}}\)\(\uparrow\) & \(\text{Acc}_{\text{EE}}\downarrow\) & \(\text{Acc}_{\text{UP}}\uparrow\) & \(\text{Acc}_{\text{harmonic}}\)\(\uparrow\) & \(\text{Acc}_{\text{EE}}\downarrow\) & \(\text{Acc}_{\text{UP}}\uparrow\) & \(\text{Acc}_{\text{harmonic}}\)\(\uparrow\) \\ \hline
SD (Original)                                      & 62.50                                & 30.50                              & -                                            & 74.0                                 & 21.50                              & -                                            & 81.00                                & 10.50                              & -                                            & 84.50                                & 7.00                               & -                                            \\
MACE                                               & 9.50                                 & 7.50                               & 13.85                                        & 14.50                                & 5.50                               & 10.34                                        & 8.00                                 & 7.00                               & 13.01                                        & 9.50                                 & 3.50                               & 6.74                                         \\
SPM                                                & 10.50                                & 24.50                              & 38.47                                        & 20.50                                & 20.50                              & 32.59                                        & 14.00                                & 7.50                               & 13.80                                        & 16.50                                & 6.50                               & 12.06                                        \\
SalUn                                              & 13.50                                & 27.00                              & 41.15                                        & 14.00                                & 19.00                              & 31.12                                        & 18.50                                & 9.50                               & 17.02                                        & 22.00                                & 4.00                               & 7.61                                         \\
\proposed~w/o~ortho & 56.50                                & 30.50                              & 35.86                                        & 52.50                                & 21.50                              & 29.60                                        & 60.00                                & 10.50                              & 16.63                                        & 47.00                                & 7.00                               & 12.37                                        \\
\proposed          & \textbf{8.50}                        & \textbf{30.50}                     & \textbf{45.75}                               & \textbf{9.00}                        & \textbf{21.50}                     & \textbf{34.78}                               & \textbf{7.00}                        & \textbf{10.50}                     & \textbf{18.87}                               & \textbf{7.50}                        & \textbf{7.00}                      & \textbf{13.02}                               \\ \hline
\end{tabular}
}
\caption{Multi-concept erasure performance as per-generation erasure set increases by conjunctions on the \textsc{ErasureBench-H} dataset. }
\label{appendix-exp2-results-EBH}
\end{table*}

\subsubsection{Additional results for erasure subset scaling via concept scope expansion}
\label{app:exp3-results}

\begin{table*}[!t] %
\vspace{-1\baselineskip}               %
\centering
\captionsetup{font=footnotesize}         %
\renewcommand{\arraystretch}{1.25}
\setlength{\tabcolsep}{4.5pt}
\centering
\resizebox{0.5\textwidth}{!}{
\begin{tabular}{cccc|c}
\hline
\multirow{2}{*}{Method}  & \multicolumn{3}{c|}{Character-level}                                                                                   & \multirow{2}{*}{FID$\downarrow$} \\ \cline{2-4}
                         & \(\text{Acc}_{\text{EE}}\downarrow\) & \(\text{Acc}_{\text{UP}}\uparrow\) & \(\text{Acc}_{\text{harmonic}}\)\(\uparrow\) &                                  \\ \hline
SD (Original)            & 72.50                                & 71.20                              & -                                            & 117.01                           \\
MACE                     & 7.50                                 & 34.40                              & 50.15                                        & 140.19                           \\
SPM                      & 27.00                                & 61.60                              & 66.82                                        & 134.57                           \\
SalUn                      & 8.50                                & 21.00                              & 34.16                                        & 134.57                           \\
\proposed &\textbf{ 7.50 }                                &\textbf{ 71.20 }                             & \textbf{80.46 }                                       & \textbf{133.04}                           \\ \hline
\end{tabular}}
\caption{Character-level concept erasure performance on the \textsc{ErasureBench-H} dataset.}
\label{exp:character}
\vspace{-1\baselineskip}               %
\end{table*}

This subsection complements Sec.~\ref{sec:experiment:main:conceptscope} by reporting the \emph{character-level} case, where the concept scope equals $1$ and thus the per-generation \emph{erasure subset} size is fixed at $N{=}1$. For each character and method, we generate 200 images and compute metrics as in Sec.~\ref{exp:Evaluation Metrics}: $\mathrm{Acc}_{\mathrm{EE}}$ (erasure effectiveness), $\mathrm{Acc}_{\mathrm{UP}}$ (utility preservation), and their harmonic mean. Table~\ref{exp:character} summarizes character-level results. As expected for $N{=}1$, absolute performance is higher than in higher concept scope settings; nevertheless, \proposed maintains the best trade-off between erasure effectiveness and utility, consistent with the trends in the main text.

\subsubsection{Qualitative comparison for erased subset scaling by conjunctions}
\label{Qualitative comparison}

This subsection complements Sec.~\ref{sec:experiment:main:conj} with qualitative examples under the same setting. 
Baselines are trained with an \emph{erasure scope} of 5 and evaluated on conjunction prompts with a per-generation \emph{erasure subset} size of $N=2$. 

As shown in Fig.~\ref{fig:appex-exp2-images}, panel~(a) illustrates \emph{erasure effectiveness}: all targets in the subset should be suppressed; any visible target indicates leakage. 
Panel~(b) illustrates \emph{utility preservation}: the specified non-target concepts must be preserved simultaneously. 
Rows correspond to the same prompt and random seed; columns compare \proposed with static-erasure baselines.

Across prompts, static baselines either exhibit target leakage or over-suppress non-targets. 
In contrast, \proposed reliably removes all concepts in the subset and, by virtue of its dynamic erasure, refrains from activating any LoRA for non-target concepts, thereby matching the base Stable Diffusion output for those elements.

\begin{figure}[!t]
    \centering
    \includegraphics[width=1\linewidth]{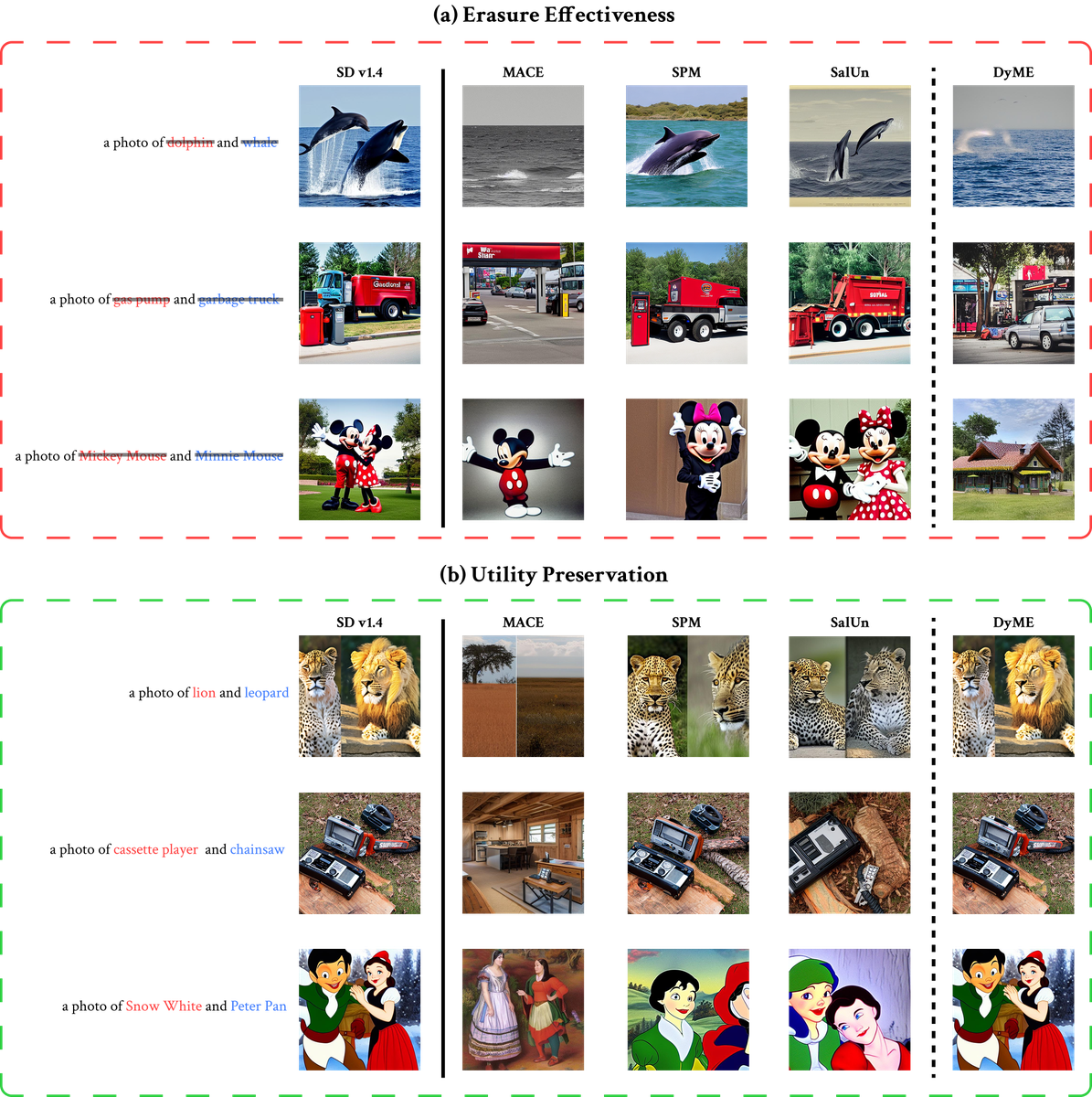}
    \caption{Qualitative comparison: the size of erased subset scaling to 2 by conjunctions. (a) \emph{Erasure effectiveness}: target concepts should be removed; any residual target indicates leakage. Across prompts, static baselines either exhibit target leakage or over-suppress non-targets. (b) \emph{Utility preservation}: all specified non-target concepts should appear simultaneously. \proposed, by virtue of its dynamic erasure, refrains from activating any LoRA for non-target concepts, thereby matching the base Stable Diffusion output for those elements. \proposed is compared against baselines and the images on the same row are generated using the same random seed.}
    \label{fig:appex-exp2-images}

\end{figure}

\subsubsection{Case Study: Concept-Scope Expansion and Per-Generation Erasure Subset}
\label{app:case-study}
We illustrate how \emph{concept-scope expansion} enlarges the per-generation \emph{erasure subset} using a brand–series–character hierachy (e.g., the brand is Disney; the series is Mickey Mouse Clubhouse; the character is Mickey Mouse).
At the character level, the subset size is $1$; at the series level it equals the number of characters in the series; at the brand level it equals the number of unit concepts under the brand.
For each level we generate images with the prompts shown in Fig.~\ref{fig:appex-exp3-images} and apply \proposed.

As shown in Fig.~\ref{fig:appex-exp3-images}, enlarging the concept scope from character to series and brand leads to leakage of an increasing number of unit concepts within the higher-level category. Consequently, the per-generation \emph{erasure subset} expands, and \proposed must dynamically activate and compose more LoRA adapters to suppress all implicated units. These observations validate the protocol of scaling the \emph{erasure subset} via \emph{concept-scope} expansion and underscore the necessity of the hierarchical benchmark \textsc{ErasureBench-H} that makes concept scope explicit.

\begin{figure}[!t]
    \centering
    \includegraphics[width=1\linewidth]{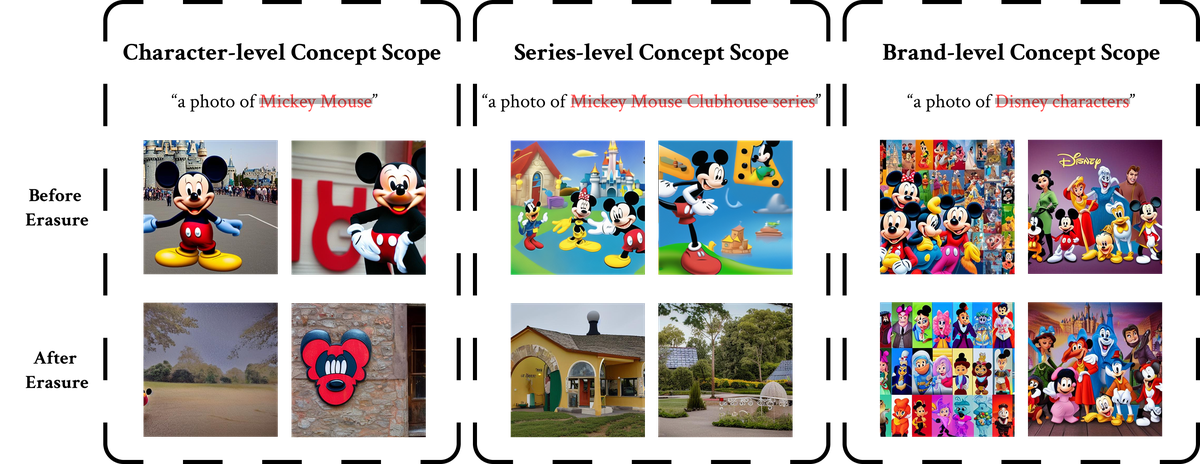}
    \caption{\textbf{Concept-scope expansion increases the per-generation erasure subset.}
Left to right: character-, series-, and brand-level concept scopes (prompts shown within each level).
Top: generations before erasure; bottom: after applying \proposed.
As concept scope grows, the number of unit concepts to suppress per generation (the erasure subset) increases.
}
    \label{fig:appex-exp3-images}

\end{figure}

\subsection{The Use of Large Language Models}
We used a large language model (e.g., ChatGPT) only for copy-editing: checking spelling, grammar, punctuation, and minor stylistic issues. No substantive content (ideas, claims, equations, methods, analyses, results, figures, tables, code, or data) was generated or modified by an LLM. All edits were reviewed and accepted by the authors, who take full responsibility for the contents of this manuscript. LLMs are not eligible for authorship.

\end{document}